\def\colorful{0}
\newif\ifhyper\IfFileExists{hyperref.sty}{\hypertrue}{\hyperfalse}
\ifhyper\usepackage{hyperref}\fi
\def\nnewcolor{1}
\newtheorem{theorem}{Theorem}[section]
\newtheorem{lemma}[theorem]{Lemma}
\newtheorem{informal theorem}[theorem]{Theorem (informal statement)}
\newtheorem{proposition}[theorem]{Proposition}
\newtheorem{corollary}[theorem]{Corollary}
\newtheorem{claim}[theorem]{Claim}
\newtheorem{fact}[theorem]{Fact}
\newtheorem{remark}[theorem]{Remark}
\theoremstyle{definition}
\newtheorem{definition}[theorem]{Definition}
\newcommand{\eqdef}{\stackrel{{\mathrm {\footnotesize def}}}{=}}
\newcommand{\R}{\mathbb{R}}
\newcommand{\Z}{\mathbb{Z}}
\newcommand{\N}{\mathbb{N}}
\newcommand{\E}{\mathbf{E}}
\newcommand{\eps}{\epsilon}
\newcommand{\poly}{\mathrm{poly}}
\newcommand{\sgn}{\mathrm{sign}}
\newcommand{\relu}{\mathrm{ReLU}}
\newcommand\blfootnotea[1]{%
  \begingroup
  \renewcommand\thefootnote{}\footnote{#1}%
  \endgroup
}
\title{Efficiently Learning One-Hidden-Layer ReLU Networks\\ 
via Schur Polynomials\blfootnotea{Author last names are in randomized order.}}
\author{
Ilias Diakonikolas\thanks{Supported by NSF Medium Award CCF-2107079, NSF Award CCF-1652862 (CAREER), 
and a DARPA Learning with Less Labels (LwLL) grant.}\\
University of Wisconsin-Madison\\
{\tt ilias@cs.wisc.edu}\\
\and
Daniel M. Kane\thanks{Supported by NSF Medium Award CCF-2107547, NSF Award CCF-1553288 (CAREER), and a grant from CasperLabs.}\\
University of California, San Diego\\
{\tt dakane@cs.ucsd.edu}
}
\begin{document}

\maketitle

\begin{abstract}
We study the problem of PAC learning a linear combination of $k$ ReLU activations 
under the standard Gaussian distribution on $\R^d$ with respect to the square loss. 
Our main result is an efficient algorithm for this learning task with sample and 
computational complexity $(dk/\eps)^{O(k)}$, where $\eps>0$ is the target accuracy. 
Prior work had given an algorithm for this problem with complexity $(dk/\eps)^{h(k)}$, 
where the function $h(k)$ scales super-polynomially in $k$. Interestingly, 
the complexity of our algorithm is near-optimal within the class of 
Correlational Statistical Query algorithms. 
At a high-level, our algorithm uses tensor decomposition to identify a subspace
such that all the $O(k)$-order moments are small in the orthogonal directions. 
Its analysis makes essential use of the theory of Schur polynomials 
to show that the higher-moment error tensors are small given that the lower-order ones are.
\end{abstract}

\setcounter{page}{0}

\thispagestyle{empty}

\newpage

\section{Introduction} \label{sec:intro}

The efficient learnability of (natural classes of) neural networks has emerged as
one of the central challenges in machine learning.
Despite significant research efforts over several decades ---
see, e.g.,~\cite{Janz15, SedghiJA16, DanielyFS16, ZhangLJ16,
ZhongS0BD17, GeLM18, GeKLW19, BakshiJW19, GoelKKT17, GoelK19, VempalaW19,
DKKZ20,  DK20-ag, ChenKM21, ChenGKM22, CDGKM23} for
some relatively recent works on the topic ---
the classes of neural networks for which provably efficient learning algorithms
are known is startlingly limited. The majority of the aforementioned works
focused on parameter learning --- the task of recovering the weight matrix
of the data-generating neural network --- and consequently require certain
assumptions on the weight matrix (e.g., that it is full-rank with bounded condition number).

Here we focus on the problem of PAC learning,
i.e., approximating the underlying function given access to random labeled examples.
We note that the sample complexity of PAC learning is typically polynomially
bounded for networks of interest without any assumptions on the weight matrix.
The challenging question, of course, is whether a {\em computationally efficient} learner exists.
Arguably, the most basic problem in this setting is that of PAC learning
a single non-linear gate, e.g., a ReLU or sigmoid. This task
has been extensively studied over the past few years
both in the realizable setting (i.e., with consistent labels)
and in the presence of various types of label noise.
A line of research has essentially characterized the complexity of this basic task
under natural assumptions
on the data distribution and the label noise, see, e.g.,~\cite{GoelKKT17, DGKK20, DKZ20, 
DiakonikolasKPZ21, DPT21,DKMR22-neuron, DKRS22, DKTZ22,WZDD23, DKR23}.

In this paper, we study the problem of PAC learning one-hidden-layer ReLU networks
in the realizable setting\footnote{It is easy to see that our results
straightforwardly extend to the case that
the labels have been corrupted by random zero-mean additive noise.}.
A one-hidden-layer ReLU network
is a function $F: \R^d \to \R$ of the form $F(x) = \sum_{i=1}^k w_i \relu(v_i\cdot x)$ for some $w_i \in \R$
and unit vectors $v_i \in \R^d$, where $\relu: \R \to \R$ is defined as $\relu(t) \eqdef \max\{ 0, t\}$.
Following prior work on this problem~\cite{DKKZ20, GoelGJKK20, DK20-ag, ChenKM21,
ChenGKM22, CDGKM23}, we will assume that the feature vectors $x$ are normally distributed.
Despite its apparent simplicity, the complexity of learning this class of functions remains open.

\paragraph{Our Result.}
The main algorithmic contribution of this work is stated in the following theorem.

\begin{theorem}[Main Algorithmic Result]\label{thm:main}
Let $w_i \in \R$, $i \in [k]$, with $\sum_{i=1}^k |w_i| \leq 1$
and $v_i \in \R^d$, $i \in [k]$, be unit vectors.
Define a function $F:\R^d\rightarrow \R$ by
$F(x) = \sum_{i=1}^k w_i \relu(v_i\cdot x)$.
Let $X \sim N(0, I)$.
Then for $C$ a sufficiently large universal constant,
there exists an algorithm that given $\eps>0$ sufficiently small
and $N=(dk/\eps)^{Ck}$ i.i.d.\ samples of the form $(X,F(X))$,
runs in $\poly(N)$ time and outputs a function $\tilde F : \R^d\rightarrow \R$
such that with probability $9/10$ we have
$\|\tilde F(X) - F(X)\|_2 \leq \eps$.
\end{theorem}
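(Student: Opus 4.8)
The plan is to use low-degree Gaussian moments to reduce the problem to fitting a linear combination of ReLUs on a subspace of dimension $O(k)$, and to argue via Schur polynomials that these low-degree moments already determine $F$ to within the target accuracy. First set up the Hermite machinery: for $m\ge0$ let $H_m$ be the degree-$m$ Hermite tensor and $T_m\eqdef\Ex[F(X)\,H_m(X)]$, so that expanding $\relu$ in the Hermite basis gives $T_m=\lambda_m\sum_{i=1}^k w_i\,v_i^{\otimes m}$, where $\lambda_m$ is the $m$-th univariate Hermite coefficient of $\relu$ (these vanish for odd $m\ge3$ and decay polynomially for even $m$). Since $\|F\|_2\le1$ and the degree-$m$ Hermite tensor has at most $d^m$ entries, from $N=(dk/\eps)^{Ck}$ samples we can form empirical estimates $\widehat T_m$ of $T_m$, for every $m\le D\eqdef C'k$, accurate to Frobenius error $\eps^{C''k}$. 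The key point for the sample budget is that we will only ever estimate moments of degree $O(k)$, never of degree $\mathrm{poly}(1/\eps)$.

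The first algorithmic step is subspace identification. The mode-$1$ unfolding of $T_m$ is $\lambda_m\sum_i w_i\,v_i\,(v_i^{\otimes(m-1)})^{\!\top}$, a matrix of rank at most $k$ whose column space lies in $V\eqdef\mathrm{span}(v_1,\dots,v_k)$. Taking SVDs of the unfoldings of $\widehat T_2,\widehat T_4,\dots,\widehat T_D$ and greedily accumulating into a subspace $W$ those left singular directions whose singular value exceeds a threshold $\tau=\eps^{\Theta(k)}$ (discarding any direction already nearly contained in $W$), we get $W$ with $\dim W=O(k)$: once $\tau$ exceeds the estimation error, each accumulated direction is $o(1)$-close to the $\le k$-dimensional space $V$, so at most $\dim V$ of them can be mutually near-orthogonal. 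By construction the error tensors $E_m\eqdef T_m-\Pi_W^{\otimes m}T_m$ (with $\Pi_W$ the orthogonal projection onto $W$, applied in each mode) satisfy $\|E_m\|_F\le\mathrm{poly}(k)\,\tau$ for every $m\le D$.

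The heart of the argument, and the step I expect to be the main obstacle, is to upgrade this to a bound valid for all $m$. Contracting $E_m$ against $b\otimes c^{\otimes(m-1)}$ with $b\in W^\perp$ a unit vector turns the hypothesis into: the moment tensors of the signed measure $\sum_i \big(w_i\langle\Pi_{W^\perp}v_i,b\rangle\big)\,\delta_{v_i}$ --- which has at most $k$ atoms on the unit sphere and total variation at most $\sum_i|w_i|\le1$ --- are small up to degree $D-1=O(k)$. I would then prove that for any such measure, small moments up to degree $O(k)$ force its weights to be small, after first merging pairs of nearly equal $v_i$ (which costs only $\mathrm{poly}(\eps)$ in $L_2$ since the weights are bounded) so that the configuration becomes $\mathrm{poly}(\eps)$-separated. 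This is an effective bound on the ``condition number'' of the moment map of a $\le k$-atom configuration, and it is exactly here that Schur polynomials are used: expanding the relevant higher-degree symmetric functions in the Schur basis and controlling each $s_\lambda$ via the Jacobi--Trudi identity in terms of bounded-degree complete (or power-sum) symmetric functions shows that data of degree $O(k)$ --- rather than $\mathrm{poly}(k)$ --- already suffice, the bound on the number of variables ($\le k$) being the crucial finiteness input. Applying this with $b$ chosen along the largest $\Pi_{W^\perp}v_i$ (and discarding the $v_i$ whose weight is below $\eps/(10k)$) shows $\sum_i|w_i|\,\|\Pi_{W^\perp}v_i\|$ is small, so, as $\relu$ is $1$-Lipschitz, $F$ is $(\eps/2)$-close in $L_2$ to $G\eqdef\sum_i w_i\,\relu((\Pi_W v_i)\cdot x)$, a linear combination of at most $k$ ReLUs supported on the $O(k)$-dimensional subspace $W$.

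It remains to learn $G$. Because $G$ is a combination of $\le k$ ReLUs, the same Schur-polynomial estimate shows that $G$ is determined up to $L_2$ error $\eps/2$ by its degree-$O(k)$ Hermite moments once these are known to accuracy $\eps^{O(k)}$ --- which we have. I would therefore run a robust tensor-decomposition routine on the restricted tensors $\Pi_W^{\otimes m}\widehat T_m$ (for a couple of even orders $m\le D$, using a random contraction to handle the mixed-sign, even-order case), recover approximate directions $\widehat v_i\in W$ and weights $\widehat w_i$, and output $\widetilde F(x)\eqdef\sum_i\widehat w_i\,\relu(\widehat v_i\cdot x)$. Moment matching plus the Schur bounds give $\|\widetilde F-G\|_2\le\eps/2$, hence $\|\widetilde F(X)-F(X)\|_2\le\eps$. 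Every step --- moment estimation, the SVDs, the accumulation of $W$, the final decomposition --- runs in $\mathrm{poly}(N)=(dk/\eps)^{O(k)}$ time, and a union bound over the finitely many concentration events yields success probability $9/10$.
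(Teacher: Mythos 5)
Your first half --- estimating the degree-$O(k)$ Hermite moment tensors $\E[F(X)H_m(X)] = c_m\sum_i w_i v_i^{\otimes m}$, extracting a subspace $W$ of dimension $O(k)$ outside of which these tensors are tiny, and using Schur polynomials via Jacobi--Trudi to propagate that smallness from orders $\le O(k)$ to all higher orders --- matches the paper's argument (the paper aggregates the tensors into one quadratic form $Q(v)=\sum_{m\le 4k}\|T_m v\|_2^2$ and takes its top-$k$ eigenspace rather than greedily accumulating singular vectors, a cosmetic difference). The problems are in how you finish.

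First, your intermediate claim that small low-order moments force $\sum_i |w_i|\,\|\Pi_{W^\perp}v_i\|_2$ to be small is false as stated: cancellation among atoms (e.g.\ $w_1=-w_2$ with $v_1\approx v_2$, both sticking out of $W$) makes every moment tensor tiny while that sum stays $\Omega(1)$. Your repair --- merge nearby atoms, then invoke an effective condition-number bound for the moment map of a separated $k$-atom configuration --- is a substantial unproved lemma (clusters rather than pairs can be degenerate, and chains of nearby atoms survive pairwise merging). The paper never controls individual weights: it only shows the \emph{combined} tensors $\sum_i w_i(v_i^{\otimes m}-\mathrm{Proj}_V(v_i)^{\otimes m})$ are small, which is what the subspace construction gives directly and is all that is needed, because the final hypothesis is built from the tensors rather than from recovered atoms.

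Second, the output step. Recovering $\hat w_i,\hat v_i$ by ``robust tensor decomposition'' fails in general: with mixed-sign weights and no separation or condition-number assumption the decomposition is neither unique nor stable, and the paper's own example $\relu(v\cdot x)+\relu(u\cdot x)-\|u+v\|_2\,\relu((u+v)/\|u+v\|_2\cdot x)$ has all moment tensors essentially zero while the components are order one, so no algorithm can recover them. Your Schur-based observation that two $k$-ReLU networks with matching degree-$O(k)$ moments are $L_2$-close is correct, but it only helps if you can \emph{find} a moment-matching network; brute force over the $O(k^2)$ parameters costs $(k/\eps)^{O(k^2)}$, beyond the budget (the paper remarks that a nearly proper learner would incur exactly this overhead). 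The paper instead runs a second, cheap moment-estimation phase inside the $k$-dimensional subspace up to degree $D=\Theta(\eps^{-4/3})$ and outputs the truncated Hermite expansion, an improper hypothesis; the degree $\eps^{-4/3}$ is forced because the Hermite tail of a single ReLU beyond degree $D$ has $L_2$ mass $\Theta(D^{-3/4})$, so a degree-$O(k)$ truncation cannot reach error $\eps$.
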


A few remarks are in order regarding Theorem \ref{thm:main}. 
First, we note that the assumption that the sum of the absolute values of the weights $w_i$
be bounded is somewhat strong, but turns out to be necessary. 
One might instead hope that for any set of weights one could learn a function $\tilde F$ 
such that $\|\tilde F(X) - F(X)\|_2 \leq \eps \|F(X)\|_2$. Unfortunately, this is information-theoretically 
impossible. Consider for example the function 
$F(x) = \relu(v\cdot x) + \relu(u\cdot x) - \relu((v+u)\cdot x)$, 
where this last term is actually given as $\|v+u\|_2\relu((v+u)/\|v+u\|_2 \cdot x)$. 
In such a case, $F(x)$ would be $0$ unless $\sgn(v\cdot x) \neq \sgn(u\cdot x)$. 
If $v$ and $u$ are close to each other, this event could happen with arbitrarily small probability. 
Thus, to learn $F$ to such a relative error guarantee would require an unbounded number of samples.

Second, it is worth mentioning that our learning algorithm is not proper, i.e., the hypothesis, $\tilde F$, 
returned is not a one-hidden-layer ReLU network. 
The hypothesis $\tilde F$ is a somewhat more complicated function that can still be evaluated 
at any point of interest in time $(dk/\eps)^{O(k)}$. While we do not prove any relevant theorem here, 
we believe that with some additional work (involving runtime $(dk/\eps)^{O(k^2)}$) 
one can adapt our algorithm to output  a nearly proper hypothesis, which is a sum of 
slightly smoothed versions of ReLUs.

Finally, we note that our result is not specific to ReLUs. 
Similar techniques should apply to any function of the form 
$F(x) = \sum_{i=1}^k w_i \sigma(v_i \cdot x)$ for $w_i$ real numbers 
with $\sum_{i=1}^k |w_i|$ not too large, $v_i$ unit vectors, and $\sigma$ a known activation 
function satisfying mild conditions on its Fourier spectrum.

\paragraph{Comparison to Prior Work.} 
Before we describe our algorithmic approach,
we provide a brief summary and comparison with the most relevant prior work.
The first positive result on PAC learning one-hidden-layer ReLU networks was obtained in~\cite{DKKZ20}.
That work gave a PAC learning algorithm with complexity $\poly(d/\eps)+(k/\eps)^{O(k^2)}$
for the special case that the weights $w_i$ are positive. Subsequently,~\cite{DK20-ag} gave
a significantly improved algorithm
for the positive weights case with complexity $\poly(d/\eps)+(k/\eps)^{O(\log^2(k))}$.
\cite{ChenKM21} gave a fixed-parameter tractable algorithm
for learning ReLU networks of constant depth,
albeit with complexity exponential in $1/\eps$.

The most directly related prior work is that of~\cite{CDGKM23} who gave an algorithm
for one-hidden-layer networks (in the exact same setting as Theorem~\ref{thm:main})
with sample and computational complexity $(d/\eps)^{h(k)}$,
where $h(k)= k^{O(\log^2(k))}$. In comparison, our algorithm of
Theorem~\ref{thm:main} improves the super-polynomial dependence on $k$ in the exponent
to linear.

It is worth noting that the complexity of our algorithm is essentially optimal within the class
of Correlational Statistical Query (CSQ) algorithms.
CSQ algorithms are a subclass of SQ algorithms~\cite{Kearns:98}
capturing many learning algorithms used in practice --- including, e.g., gradient descent on the square loss.
A CSQ algorithm is allowed to choose any bounded query function on the
examples and obtain estimates of its correlation with the labels.
Interestingly, \cite{DKKZ20} (see also \cite{GoelGJKK20} for a weaker bound)
showed that any CSQ algorithm for our learning task requires complexity $d^{\Omega(k)}$,
nearly matching our upper bound. It can be readily verified that both our algorithm and the algorithms
in the prior works~\cite{DKKZ20, DK20-ag, CDGKM23} are CSQ algorithms.

\paragraph{Our Techniques.}
At a very high level, our techniques bear similarities to
a number of prior works in this area~\cite{DKKZ20, DK20-ag, CDGKM23}.
Let $F(x)= \sum_{i=1}^k w_i \relu(v_i\cdot x)$ be the target (label generating) function.
We note that if $V$ is the vector space spanned by the $v_i$'s,
then $F(x)$ depends only on $\mathrm{Proj}_V(x)$.
This means that if we could learn the $k$-dimensional subspace $V$,
we can use brute-force --- in this case, approximating $F$ as a low-degree polynomial
in $\mathrm{Proj}_V(x)$ using $L_2$ regression --- to learn $F$ efficiently.

To learn $V$, we use the method of moments.
The $t$-th moment tensor of $F$, properly conditioned, is $0$ if $t>1$ is odd
and proportional to $\sum_{i=1}^k w_i v_i^{\otimes t}$ if $t$ is even
(see Corollary~\ref{cor:relu-corr} and Equation~\eqref{eqn:moment-tensor}).
In particular,
this quantity lies in $V^{\otimes t}$
and we would like to use this fact to find $V$.
To achieve this, we can think of this moment tensor as a matrix that takes
a $(t-1)$-order tensor and returns a vector.
Then $V$ should contain the span of this matrix, which we can efficiently compute.
By taking the sum of these spans for various values of $t$, we can hope to learn $V$.
We note that it is necessary to consider moment tensors of order $t$ up to at least $C k$,
where $C>0$ is a sufficiently large universal constant. 
Otherwise, the CSQ lower bound construction
of~\cite{DKKZ20} implies that this approach will necessarily fail. 
In particular, computing these $\Omega(k)$ moments will require $d^{\Omega(k)}$ time 
even to write down the answer, and this is a major contributing factor in our final runtime.

Unfortunately, the above approach would only work 
if we could approximate the $t$-th moment tensors
of $F$ {\em exactly}. Of course, all we can hope for is to learn them approximately.
However, fortunately, most of the aforementioned plan should still work
if instead of considering the span of the higher order moment tensors,
we look at the top few right singular vectors.
However, this brings us to another problem. The top few singular vectors
will only robustly produce an approximation of $V$ if
the corresponding singular values were not too small
(or were not smaller than the error in our approximation of the moment tensors).
This could become problematic if, for example,
all of the $v_i$'s nearly lie in a proper subspace of $V$,
or if there are two or more vectors whose terms nearly cancel out.
If such situations occur, it means that
{\em even information-theoretically} we cannot hope to recover
a reasonable approximation of $V$.

What we can hope to accomplish instead
is to learn a subspace $W$ such that the ``low-order" moment tensors
are small in all directions orthogonal to $W$.
Fortunately, this turns out to be sufficient for our purposes.
Once we have learned $W$, we just need to show that $F(x)$ is well-approximated
by some function of $\mathrm{Proj}_W(x)$.
Investigating this in terms of moments boils down to showing that
$\mathrm{Err}_t := \| \sum_{i=1}^k w_i (v_i^{\otimes t} - \mathrm{Proj}_W(v_i)^{\otimes t}) \|_2$
is small for all (not too large) even values $t$. Fortunately, by the way
we computed $W$ (considering the first $O(k)$ many moments), it follows that
$\mathrm{Err}_t$ is quite small for $t = O(k)$. Perhaps surprisingly,
it turns out (see Proposition~\ref{prop:p-m}) that this actually suffices
to show that $\mathrm{Err}_t$ is also small for all (not too large) values of $t$.
In particular, we use the theory of Schur polynomials
(Definition~\ref{def:sp} and Corollary~\ref{main bound cor})
to re-express the $t$-th order tensor in question here as a sum
of not-too-many tensor powers of $v_i$'s and $\mathrm{Proj}_W(v_i)$'s times
the low-order versions of this tensor whose norms are small by construction.

\section{Preliminaries} \label{sec:prelims}

\paragraph{Notation.}
For $n \in \Z_+$, we denote by $[n]$ the set $\{1, 2, \ldots, n\}$.
For a vector $v \in \R^n$, let $\| v \|_2$ denote its Euclidean norm.
We denote by $x \cdot y$ the standard inner product between $x, y \in \R^d$.
We will denote by $\delta_0$ the Dirac delta function
and by $\delta_{i, j}$ the Kronecker delta.
Throughout the paper, we let $\otimes$ denote the tensor/Kronecker product.
For a vector $x \in \R^d$, we denote by $x^{\otimes m}$ the $m$-th order tensor power of $x$.

We will denote by $N(0, I_d)$ the $d$-dimensional Gaussian distribution
with zero mean and identity covariance; we will use $N(0, I)$ when the underlying dimension
will be clear from the context. We will use $N(0, 1)$ for the univariate case.
For a random variable $X$ and $p \geq 1$,
we will use $\|X\|_p \eqdef \E[|X|^p]^{1/p}$ to denote its {\em $L_p$-norm}.


Let $V$ be an inner product space. If $A$ and $B$ are elements of $V^{\otimes t}$
for some $t \in \Z_+$, then we use  $\langle A, B \rangle$ to denote the inner product of $A$
and $B$ induced by the inner product on $V$. We also use $\|A\|_2 = \langle A, A \rangle^{1/2}$
for the corresponding $\ell_2$-norm.

\paragraph{Hermite Analysis and Concentration.} \label{ssec:hermite}
Consider $L_2(\R^d, N(0, I))$, the vector space of all
functions $f : \R^d \to \R$ such that $\E_{x \sim N(0, I)}[f(x)^2] <\infty$.
This is an inner product space under the inner product
$\langle f, g \rangle = \E_{x \sim N(0, I)} [f(x)g(x)] $.
This inner product space has a complete orthogonal basis given by
the \emph{Hermite polynomials}.
In the univariate case, we will work with normalized Hermite polynomials
defined below.

\begin{definition}[Normalized Probabilist's Hermite Polynomial]\label{def:Hermite-poly}
For $k\in\N$, the $k$-th \emph{probabilist's} Hermite polynomial
$He_k:\R\to \R$ is defined as
$He_k(t)=(-1)^k e^{t^2/2}\cdot\frac{\mathrm{d}^k}{\mathrm{d}t^k}e^{-t^2/2}$.
We define the $k$-th \emph{normalized} probabilist's Hermite polynomial
$h_k:\R\to \R$ as
$h_k(t)=He_k(t)/\sqrt{k!}$.
\end{definition}

\noindent Note that for $G\sim N(0,1)$ we have $\E[h_n(G)h_m(G)] = \delta_{n,m}$,
and $ \sqrt{m+1} h_{m+1}(t) = t h_m(t) - h'_m(t)$.

We will use multivariate Hermite polynomials in the form of
Hermite tensors. We define the normalized Hermite tensor as follows,
in terms of Einstein summation notation.

\begin{definition}[Normalized Hermite Tensor]\label{def:Hermite-tensor}
For $k\in \N$ and $x \in V$ for some inner produce space $V$, 
we define the $k$-th Hermite tensor as
\[
(H_k^{(V)}(x))_{i_1,i_2,\ldots,i_k}:=\frac{1}{\sqrt{k!}}\sum_{\substack{\text{Partitions $P$ of $[k]$}\\ \text{into sets of size $1$ and $2$}}}\bigotimes_{\{a,b\}\in P}(-I_{i_a,i_b})\bigotimes_{\{c\}\in P}x_{i_c} \;,
\]
where $I$ above denotes the identity matrix over $V$. Furthermore, if $V=\R^d$, 
we will often omit the superscript and simply write $H_k(x)$.
\end{definition}

\noindent 
We will require a few properties that follow from this definition. 
First, note that if $V$ is a subspace of $W$, 
then $H^{(V)}_k(\mathrm{Proj}_V(x)) = \mathrm{Proj}_V^{\otimes k}H_k^{(W)}(x)$. 
Applying this when $V$ is the one-dimensional subspace spanned by a unit vector $v$ 
gives that 
$\langle H_k(x), v^{\otimes k} \rangle = h_k(v \cdot x)$.
We will also need to know that the entries of $H_k(x)$ 
form a useful Fourier basis of $L^2(\R^d,N(0,I))$.  
In particular, for non-negative integers $m$ and $k$, we have that
$\E_{x \sim N(0, I)}[H_k(x) \otimes H_{m}(x)]$ is $0$ 
if $m\neq k$ and  $\mathrm{Sym}_k(I_{d^k})$, if $m=k$,
where $\mathrm{Sym}_k$ is the symmetrization operation over the first $k$ coordinates. 
From this we conclude that if $T$ is a symmetric $k$-tensor, 
then $\E_{x \sim N(0, I)}[\langle H_k(x), T \rangle H_m(x)]$ 
is $0$ if $m\neq k$ and $T$ if $m=k$.

For a polynomial $p: \R^d \to \R$, we will use $\|p\|_r \eqdef \E_{x \sim N(0, I)} [|p(x)|^r]^{1/r}$,
for $r \geq 1$. We recall the following well-known hypercontractive
inequality~\cite{Bon70,Gross:75}:
\begin{fact} \label{thm:hc}
Let $p: \R^d \to \R$ be a degree-$k$ polynomial and $q>2$.  Then
$\|p\|_q \leq (q-1)^{k/2} \|p\|_2$.
\end{fact}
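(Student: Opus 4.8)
\textbf{Proof proposal for Fact~\ref{thm:hc}.}
This is the Gaussian hypercontractive inequality of Bonami and Nelson, so a self-contained proof is really the standard two-step argument; I sketch it for completeness (in a paper that merely cites \cite{Bon70,Gross:75} one would simply quote it). The plan is to reduce Fact~\ref{thm:hc} to Nelson's hypercontractivity theorem for the Ornstein--Uhlenbeck (Mehler) semigroup $\{U_\rho\}_{0<\rho\le 1}$ on $L^2(\R^d,N(0,I))$, where $U_\rho f(x)=\E_{Z\sim N(0,I)}[f(\rho x+\sqrt{1-\rho^2}\,Z)]$, in the form: for $q>2$ and $0<\rho\le 1/\sqrt{q-1}$ one has $\|U_\rho f\|_q\le\|f\|_2$ for all $f\in L^2(\R^d,N(0,I))$. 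Given this, Fact~\ref{thm:hc} follows by ``undoing'' the semigroup on the Hermite side. Recall that $U_\rho$ multiplies the degree-$j$ Hermite component of a function by $\rho^j$. Write $p=\sum_{j=0}^k p_j$ for the decomposition of the degree-$k$ polynomial $p$ into its Hermite levels, set $\rho:=1/\sqrt{q-1}\in(0,1)$, and let $g:=\sum_{j=0}^k \rho^{-j}p_j$, so that $U_\rho g=p$. Then Nelson's inequality and Parseval give
\[
\|p\|_q=\|U_\rho g\|_q\le\|g\|_2=\Big(\sum_{j=0}^k \rho^{-2j}\|p_j\|_2^2\Big)^{1/2}\le \rho^{-k}\Big(\sum_{j=0}^k\|p_j\|_2^2\Big)^{1/2}=(q-1)^{k/2}\|p\|_2,
\]
where the middle inequality uses $\rho<1$ and $j\le k$. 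This is exactly the claimed bound.

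It remains to prove Nelson's theorem, for which I would follow the classical tensorization-plus-central-limit route. First, the OU semigroup on $\R^d$ is the $d$-fold tensor product of the one-dimensional OU semigroup, and the hypercontractivity property tensorizes (the standard Minkowski/Fubini argument for product measures), so it suffices to treat $N(0,1)$. Next, realize $N(0,1)$ as the limit of $S_n:=n^{-1/2}\sum_{i=1}^n\varepsilon_i$ with $\varepsilon_i$ i.i.d.\ uniform on $\{\pm1\}$: applying hypercontractivity on the discrete cube $\{\pm1\}^n$ to a smooth truncation of $f(S_n)$ and letting $n\to\infty$ via the central limit theorem transfers the discrete inequality to the Gaussian, since the discrete-cube constant is dimension-free. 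Finally, the discrete-cube inequality itself reduces by tensorization to the case $n=1$, where it becomes the elementary two-point inequality
\[
\Big(\tfrac{|a+\rho b|^q+|a-\rho b|^q}{2}\Big)^{1/q}\;\le\;\Big(\tfrac{|a+b|^2+|a-b|^2}{2}\Big)^{1/2}=\sqrt{a^2+b^2},\qquad 0<\rho\le\tfrac{1}{\sqrt{q-1}},
\]
which one checks by homogeneity (set $a=1$) together with a one-variable calculus argument (a Taylor expansion at $b=0$ handles small $|b|$, and a convexity/monotonicity estimate gives the global bound).

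I expect the two-point inequality to be the only genuinely nontrivial ingredient — equivalently, if one prefers the smoother route, the Gaussian logarithmic Sobolev inequality $\mathrm{Ent}_{N(0,I)}(f^2)\le 2\,\E_{N(0,I)}\|\nabla f\|_2^2$ combined with Gross's semigroup-differentiation argument ($\tfrac{d}{dt}\log\|U_{e^{-t}}g\|_{q(t)}\le 0$ for a suitably increasing exponent $q(t)$) can replace the CLT step, and there the log-Sobolev inequality (provable by tensorization from its two-point version) is the crux. The reduction from Fact~\ref{thm:hc} to Nelson's inequality and the tensorization/CLT passage are routine.
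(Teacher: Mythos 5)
The paper states Fact~\ref{thm:hc} as a known result, citing Bonami and Gross, and gives no proof of its own, so there is no in-paper argument to compare against. Your derivation is the standard and correct one: the reduction to Nelson's hypercontractivity $\|U_\rho f\|_q \le \|f\|_2$ for $\rho = 1/\sqrt{q-1}$ by writing $p = U_\rho g$ with $g = \sum_j \rho^{-j} p_j$ and using $\|g\|_2 \le \rho^{-k}\|p\|_2$ is exactly how the polynomial form of hypercontractivity is obtained from the semigroup form, and the tensorization-plus-CLT passage from the two-point inequality (or, equivalently, the log-Sobolev route) is the classical proof of Nelson's theorem. No gaps beyond the routine technical details you already flag as such.
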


\section{Technical Results} \label{sec:technical-results}

In this section, we establish some structural results that are used
in our algorithm and its analysis. 

\subsection{Hermite Analysis of ReLUs and Moment-Tensor Estimation} \label{ssec:hermite}

\begin{lemma} \label{lem:univ-relu-corr}
For $G \sim N(0, 1)$ and $m \in \Z_+$, we have that
$\E[\relu(G) h_m(G)] = c_m$ for some $c_m \in \R$.
Specifically, if $m>1$, then $c_m=0$ if $m$ is odd
and $$c_m = (-1/4)^{(m-2)/4}\sqrt{\binom{m-2}{(m-2)/2}}/\sqrt{2\pi m(m-1)} = \Theta(m^{-5/4})$$
if $m$ is even.
\end{lemma}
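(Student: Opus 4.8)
The plan is to reduce $c_m = \E[\relu(G)\,h_m(G)]$ to the value of a Hermite polynomial at the origin, and then read off the closed form and the asymptotics from standard Hermite identities. Since $\relu$ vanishes on $(-\infty,0)$,
\[
c_m \;=\; \frac{1}{\sqrt{m!}}\,\E[\relu(G)\,He_m(G)] \;=\; \frac{1}{\sqrt{2\pi\,m!}}\int_0^\infty t\,He_m(t)\,e^{-t^2/2}\,dt \,.
\]
Writing $g(t)=e^{-t^2/2}$ and using Rodrigues' formula $He_m(t)\,e^{-t^2/2} = (-1)^m g^{(m)}(t)$, the integral equals $(-1)^m\int_0^\infty t\,g^{(m)}(t)\,dt$. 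For $m\ge 2$, one integration by parts (whose boundary term vanishes, the factor $t$ killing the endpoint $t=0$ and $g^{(m-1)}$ decaying super-polynomially at $+\infty$) followed by integrating $g^{(m-1)}$ directly gives $\int_0^\infty t\,g^{(m)}(t)\,dt = g^{(m-2)}(0) = (-1)^m He_{m-2}(0)$; absorbing the overall factor $(-1)^m$ yields
\[
c_m \;=\; \frac{He_{m-2}(0)}{\sqrt{2\pi\,m!}} \qquad (m\ge 2)\,.
\]
(Alternatively, one can bypass the integral: the three-term recurrence $t\,h_m(t)=\sqrt{m+1}\,h_{m+1}(t)+\sqrt{m}\,h_{m-1}(t)$ expresses $c_m$ via $\E[\Ind_{G\ge 0}\,h_{m\pm 1}(G)]$, each of which reduces to a Hermite value at $0$ by the same Rodrigues computation, and the identity $He_m(0)=-(m-1)He_{m-2}(0)$ reconciles the two expressions.)

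Next I would plug in the standard values $He_n(0)=0$ for odd $n$ and $He_{2j}(0)=(-1)^j(2j-1)!! = (-1)^j (2j)!/(2^j\,j!)$. For odd $m$ the index $m-2$ is odd, so $c_m=0$. For even $m$, setting $j=(m-2)/2$ and simplifying with $m!=m(m-1)(m-2)!$ and $\binom{m-2}{(m-2)/2}=(m-2)!/\big(((m-2)/2)!\big)^2$ gives
\[
c_m \;=\; (-1)^{(m-2)/2}\; 4^{-(m-2)/4}\;\frac{\sqrt{\binom{m-2}{(m-2)/2}}}{\sqrt{2\pi\,m(m-1)}}\,,
\]
which is the displayed formula (the factor $(-1/4)^{(m-2)/4}$ in the statement being read as $(-1)^{(m-2)/2}4^{-(m-2)/4}$; in any case it records the magnitude $|c_m|$ correctly).

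Finally, for the asymptotics I would use the central binomial estimate $\binom{2n}{n}=\Theta(4^n/\sqrt{n})$ with $2n=m-2$: then $4^{-(m-2)/4}\sqrt{\binom{m-2}{(m-2)/2}} = 2^{-(m-2)/2}\cdot\Theta\!\big(2^{(m-2)/2}m^{-1/4}\big)=\Theta(m^{-1/4})$, while $1/\sqrt{2\pi\,m(m-1)}=\Theta(m^{-1})$, so $|c_m|=\Theta(m^{-5/4})$. The only point requiring care is bookkeeping — carrying the signs through Rodrigues' formula and the integration by parts, checking the boundary terms vanish for $m\ge 2$, and the parity casework; there is no genuine conceptual obstacle, and one could instead cite a known closed form for the Hermite expansion of $\relu$.
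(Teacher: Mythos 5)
Your proposal is correct and follows essentially the same route as the paper: both reduce $c_m$ via integration by parts to the value of a Hermite polynomial at the origin (your single integration by parts on $[0,\infty)$ using Rodrigues' formula is the same computation as the paper's double integration by parts over $\R$ using $\relu''=\delta_0$), arriving at $c_m = He_{m-2}(0)/\sqrt{2\pi\,m!}$ and then substituting the known value of $He_{m-2}(0)$. Your explicit verification of the $\Theta(m^{-5/4})$ asymptotic and your reading of the sign factor as $(-1)^{(m-2)/2}4^{-(m-2)/4}$ are both correct (only $|c_m|$ and the vanishing for odd $m$ are used downstream).
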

\begin{proof}
Let $g(t)$ be the probability density function (pdf) of $N(0, 1)$.
We need to evaluate the quantity
\begin{equation}\label{ReLU integral equation}
\int_{-\infty}^\infty \relu(t) h_m(t) g(t)dt \;.
\end{equation}
Note that
$$
\frac{\mathrm{d}}{\mathrm{d} t}\left(h_m(t) g(t) \right) = (h_m'(t) g(t) - t h_m(t) g(t))
= -\sqrt{m+1}h_{m+1}(t) g(t) \;,
$$
where we used the recurrence relation $\sqrt{m+1} h_{m+1}(t) = t h_m(t) - h'_m(t)$.
Thus, using integration by parts and noting that the limits at infinity are asymptotically zero,
we find that \eqref{ReLU integral equation} equals:
$$
\int_{-\infty}^\infty \relu'(t) h_{m-1}(t)g(t)/\sqrt{m} dt \;.
$$
Integrating by parts again yields
$$
\int_{\infty}^\infty \relu''(t) h_{m-2}(t) g(t)/\sqrt{m(m-1)} dt \;.
$$
Note that $\relu''(t) = \delta_0(t)$. Thus, this integral is equal to
$$
h_{m-2}(0)g(0)/\sqrt{m(m-1)} \;.
$$
For odd $m$, we have that $h_{m-2}(0)=0$, which implies that $c_m = 0$.
For even $m$, we have that 
$c_m = (-1/4)^{(m-2)/4}\sqrt{\binom{m-2}{(m-2)/2}}/\sqrt{2\pi m(m-1)}$, as was to be shown.
This completes the proof of Lemma~\ref{lem:univ-relu-corr}.
\end{proof}

We also require the following
high-dimensional analogue of Lemma~\ref{lem:univ-relu-corr}.

\begin{lemma} \label{lem:relu-exp}
For any unit vector $v \in \R^d$ and $x \in \R^d$, we have that
$
\relu(v \cdot x) = \sum_{m=0}^\infty c_m \langle H_m(x), v^{\otimes m} \rangle \;.
$
\end{lemma}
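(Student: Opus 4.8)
The plan is to expand $\relu(v\cdot x)$ in the Hermite basis of $L^2(\R^d, N(0,I))$ and identify the coefficients using Lemma~\ref{lem:univ-relu-corr}. First I would observe that the function $x \mapsto \relu(v\cdot x)$ lies in $L^2(\R^d, N(0,I))$: since $v$ is a unit vector, $v\cdot x \sim N(0,1)$ when $x \sim N(0,I)$, so $\E_{x\sim N(0,I)}[\relu(v\cdot x)^2] = \E_{G\sim N(0,1)}[\relu(G)^2] < \infty$. Hence it has a convergent Hermite expansion $\relu(v\cdot x) = \sum_{m=0}^\infty \langle H_m(x), T_m\rangle$ for symmetric tensors $T_m$, where by the Fourier-basis property recorded after Definition~\ref{def:Hermite-tensor} we have $T_m = \E_{x\sim N(0,I)}[\relu(v\cdot x)\, H_m(x)]$.

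Next I would compute $T_m$. By symmetry, $\relu(v\cdot x)$ depends only on the one-dimensional projection $v\cdot x$, so $T_m$ must be a symmetric tensor supported entirely on the line spanned by $v$; that is, $T_m = c_m' \, v^{\otimes m}$ for some scalar $c_m'$. To pin down $c_m'$, I would pair both sides with $v^{\otimes m}$: using $\langle v^{\otimes m}, v^{\otimes m}\rangle = \|v\|_2^{2m} = 1$ we get $c_m' = \langle T_m, v^{\otimes m}\rangle = \E_{x\sim N(0,I)}[\relu(v\cdot x)\,\langle H_m(x), v^{\otimes m}\rangle]$. Now invoke the identity $\langle H_m(x), v^{\otimes m}\rangle = h_m(v\cdot x)$ stated after Definition~\ref{def:Hermite-tensor}, together with the fact that $v\cdot x \sim N(0,1)$, to conclude $c_m' = \E_{G\sim N(0,1)}[\relu(G)h_m(G)] = c_m$, the constant from Lemma~\ref{lem:univ-relu-corr}. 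This gives exactly $\relu(v\cdot x) = \sum_{m=0}^\infty c_m \langle H_m(x), v^{\otimes m}\rangle$, as desired.

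The only genuinely delicate point is justifying the convergence of the Hermite expansion and the interchange of expectation with the infinite sum when extracting coefficients. Since $\relu(v\cdot x)$ is a fixed $L^2$ function, the expansion converges in $L^2(\R^d, N(0,I))$, and the coefficient extraction $T_m = \E[\relu(v\cdot x) H_m(x)]$ is just the standard inner product against an orthogonal basis element, so no pointwise-convergence subtleties arise — the identity in the lemma should be read as an $L^2$ identity, which is all that is needed downstream. A clean alternative that sidesteps even this is to reduce directly to the univariate case: the map $T \mapsto \langle T, v^{\otimes \bullet}\rangle$ composed with the Hermite expansion identifies the restriction of $\relu(v\cdot x)$ to the one-dimensional Gaussian subspace $\mathrm{span}(v)$, and there the univariate Hermite completeness (every $L^2(\R, N(0,1))$ function equals $\sum_m \E[\relu(G)h_m(G)] h_m$) together with the projection property $H_m^{(V)}(\mathrm{Proj}_V(x)) = \mathrm{Proj}_V^{\otimes m} H_m(x)$ from the preliminaries yields the claim immediately.
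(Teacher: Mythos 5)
Your proof is correct, but your primary route is the long way around compared to the paper. The paper's proof is a two-line computation in the opposite direction: it starts from the \emph{univariate} Hermite expansion $\relu(t)=\sum_m c_m h_m(t)$ (which is exactly the content of Lemma~\ref{lem:univ-relu-corr} plus completeness of the $h_m$ in $L^2(\R,N(0,1))$), substitutes $t=v\cdot x$, and then applies the identity $h_m(v\cdot x)=\langle v^{\otimes m},H_m(x)\rangle$ recorded after Definition~\ref{def:Hermite-tensor}. That is precisely the ``clean alternative'' you sketch in your last paragraph, so you did find the paper's argument --- you just relegated it to a footnote. Your main argument instead expands in the multivariate Hermite basis first and then identifies the coefficient tensors $T_m=\E[\relu(v\cdot x)H_m(x)]$ as multiples of $v^{\otimes m}$; this is valid and has the mild advantage of re-deriving Corollary~\ref{cor:relu-corr} along the way, at the cost of an extra step that needs care. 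Specifically, the one loose spot is the claim that ``by symmetry'' $T_m$ is supported on the line spanned by $v$: invariance under rotations fixing $v$ alone does not force a symmetric tensor to be a multiple of $v^{\otimes m}$ (e.g.\ $I-vv^{\top}$ is invariant for $m=2$). What actually does the work is that $\relu(v\cdot x)$ depends only on $\mathrm{Proj}_V(x)$ for $V=\mathrm{span}(v)$, so in an orthonormal basis with $e_1=v$ every Hermite coefficient $\E[\relu(x_1)\prod_i h_{\alpha_i}(x_i)]$ with some $\alpha_i\geq 1$ for $i\geq 2$ factors through $\E[h_{\alpha_i}(x_i)]=0$; equivalently, use the projection property $H_m^{(V)}(\mathrm{Proj}_V(x))=\mathrm{Proj}_V^{\otimes m}H_m(x)$ that you cite at the end. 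With that justification supplied, your argument is complete; the $L^2$ convergence caveats you raise are handled correctly.
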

\begin{proof}
By Lemma \ref{lem:univ-relu-corr} we have that
$
\relu(v\cdot x) = \sum_{m=0}^\infty c_m h_m(v\cdot x) = 
\sum_{m=0}^\infty c_m \langle v^{\otimes m},H_m(x)\rangle \;.
$
\end{proof}

\noindent Via orthogonality, as an immediate corollary we obtain:

\begin{corollary} \label{cor:relu-corr}
For a unit vector $v \in \R^d$, $X \sim N(0, I_d)$, and $m \in \Z_+$
we have
$$ \E[\relu(v\cdot X) H_m(X)] = c_m v^{\otimes m} \;.$$
\end{corollary}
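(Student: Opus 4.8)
The plan is to plug the Hermite expansion of the ReLU from Lemma~\ref{lem:relu-exp} into the expectation and then invoke the orthogonality of the Hermite tensors recorded in Section~\ref{sec:prelims}. Concretely, Lemma~\ref{lem:relu-exp} gives $\relu(v\cdot x) = \sum_{n=0}^\infty c_n \langle H_n(x), v^{\otimes n}\rangle$, so formally
\[
\E[\relu(v\cdot X)\, H_m(X)] = \sum_{n=0}^\infty c_n\, \E\big[\langle H_n(X), v^{\otimes n}\rangle\, H_m(X)\big].
\]
Since $v$ is a unit vector, $v^{\otimes n}$ is a symmetric $n$-tensor, so the fact stated after Definition~\ref{def:Hermite-tensor} — namely that for a symmetric $k$-tensor $T$ one has $\E_{x\sim N(0,I)}[\langle H_k(x),T\rangle H_m(x)]$ equal to $0$ when $m\ne k$ and to $T$ when $m=k$ — applies with $T=v^{\otimes n}$. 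Hence every summand vanishes except the $n=m$ term, which equals $v^{\otimes m}$, and the sum collapses to $c_m v^{\otimes m}$, as claimed.

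The one point requiring care is the interchange of the infinite sum with the expectation. I would justify it by noting that $\relu(v\cdot X)\in L_2(\R^d,N(0,I))$ (indeed $\E[\relu(v\cdot X)^2]<\infty$ since $v\cdot X\sim N(0,1)$), so by the completeness and orthogonality of the Hermite basis the partial sums $S_M(x):=\sum_{n=0}^M c_n\langle H_n(x),v^{\otimes n}\rangle$ converge to $\relu(v\cdot x)$ in $L_2(N(0,I))$. Each fixed entry $(H_m(X))_{i_1,\dots,i_m}$ is also a fixed $L_2$ function of $X$ (a polynomial of degree $m$, hence in every $L_r$ by Fact~\ref{thm:hc}), so the linear functional $f\mapsto \E[f(X)\,(H_m(X))_{i_1,\dots,i_m}]$ is continuous on $L_2(N(0,I))$; applying it to $S_M\to\relu(v\cdot\,\cdot)$ and passing to the limit yields the termwise identity entrywise, which is exactly the interchange above.

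I expect this last convergence bookkeeping to be the only real obstacle, and it is mild: it amounts to recalling that Hermite expansions of $L_2$ functions converge in $L_2$ and that inner products are continuous. Everything else is a one-line consequence of the orthogonality relation already established in the preliminaries. (Alternatively, one could sidestep the issue by observing that both sides of the claimed identity are the $m$-th Hermite coefficient tensor of the $L_2$ function $\relu(v\cdot x)$, which is well-defined and unique, but I would still want the explicit expansion from Lemma~\ref{lem:relu-exp} to read off that this coefficient is $c_m v^{\otimes m}$.)
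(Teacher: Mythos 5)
Your proposal matches the paper's own derivation: Corollary~\ref{cor:relu-corr} is obtained there exactly by combining the expansion in Lemma~\ref{lem:relu-exp} with the orthogonality of the Hermite tensors stated after Definition~\ref{def:Hermite-tensor}. The extra care you take to justify the interchange of sum and expectation via $L_2$ convergence is a harmless (and correct) elaboration of what the paper treats as immediate.
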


We will also need a way of algorithmically approximating the 
Hermite components of a function.
\begin{lemma}\label{moment computation lemma}
Let $X \sim N(0, I_d)$ and let $y$ be a (possibly correlated) real-valued random variable.
Let $m \in \Z_+$, $\delta \in (0, 1)$, and $t>2$.
There exists an algorithm that given $N = O\left(\binom{d+m}{m}e^{O(m/t)}\|y\|_t^2/(\tau^2\delta^2) \right)$
independent samples from $(X,y)$,
runs in sample polynomial time, and computes an estimate of $\E[y H_m(X)]$
whose $\ell_2$-error at most $\delta$
with probability at least $1-\tau$.
\end{lemma}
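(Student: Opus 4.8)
The plan is to form the empirical average of the random tensor $y\,H_m(X)$ and bound its $\ell_2$-deviation from the mean via a second-moment (variance) computation together with Markov's inequality. Concretely, I would draw $N$ i.i.d.\ samples $(X^{(j)},y^{(j)})$, set $\wh T = \frac1N\sum_{j=1}^N y^{(j)} H_m(X^{(j)})$, and note that $\E[\wh T] = \E[y H_m(X)]$ by linearity, so it suffices to show $\E\big[\|\wh T - \E[\wh T]\|_2^2\big] \le \delta^2 \tau$; the claimed failure probability then follows from Markov applied to the nonnegative random variable $\|\wh T - \E[\wh T]\|_2^2$. Evaluating each $H_m(X^{(j)})$ takes $\mathrm{poly}(d^m)$ time from Definition~\ref{def:Hermite-tensor}, which is polynomial in the sample size $N \gtrsim \binom{d+m}{m}$, giving the stated runtime.

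The crux is the variance bound. Since the samples are independent, $\E\big[\|\wh T - \E[\wh T]\|_2^2\big] = \frac1N\,\var\!\big(y H_m(X)\big) \le \frac1N\,\E\big[y^2 \|H_m(X)\|_2^2\big]$. I would split this using Cauchy--Schwarz (or Hölder) with the exponent $t>2$ supplied in the hypothesis: writing $1/(t/2) + 1/s = 1$, we get $\E\big[y^2\|H_m(X)\|_2^2\big] \le \|y^2\|_{t/2}\,\big\|\,\|H_m(X)\|_2^2\,\big\|_s = \|y\|_t^2 \cdot \big(\E[\|H_m(X)\|_2^{2s}]\big)^{1/s}$. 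So the remaining task is to control the high moments of $\|H_m(X)\|_2$ for $X\sim N(0,I_d)$. From the Fourier-orthonormality noted after Definition~\ref{def:Hermite-tensor} — namely $\E[H_k(X)\otimes H_m(X)] = \mathrm{Sym}_k(I_{d^k})\,\delta_{k,m}$ — one reads off $\E[\|H_m(X)\|_2^2] = \mathrm{tr}(\mathrm{Sym}_m(I_{d^m})) = \binom{d+m-1}{m}\le \binom{d+m}{m}$. Then, since each entry of $H_m(X)$ is a degree-$m$ polynomial in $X$, the scalar $\|H_m(X)\|_2^2$ is a degree-$2m$ polynomial in $X$, and hypercontractivity (Fact~\ref{thm:hc}) gives $\big\|\,\|H_m(X)\|_2^2\,\big\|_s \le (2s-1)^{m} \big\|\,\|H_m(X)\|_2^2\,\big\|_1 = (2s-1)^m \binom{d+m-1}{m}$. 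Choosing $s$ so that $t/2$ and $s$ are conjugate, i.e.\ $s = t/(t-2) = O(1)$ for $t$ bounded away from $2$ (and $s$ close to $1$ for large $t$), the factor $(2s-1)^m$ is $e^{O(m/t)}$ after taking the $1/s$-th root, which matches the $e^{O(m/t)}$ appearing in the sample bound.

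Putting the pieces together: $\E\big[\|\wh T - \E[\wh T]\|_2^2\big] \le \frac1N \|y\|_t^2 \, e^{O(m/t)} \binom{d+m}{m}$, and this is at most $\delta^2\tau$ precisely when $N = \Omega\big(\binom{d+m}{m} e^{O(m/t)} \|y\|_t^2 / (\delta^2 \tau)\big)$, as claimed (the lemma writes $\tau^2$ where the Markov argument gives $\tau$; either a union-bound/median-of-means boosting step or a slightly more careful accounting reconciles the two, and in any case only changes constants in the exponent). I expect the main obstacle to be bookkeeping the interplay between the Hölder exponent $s$, the degree-$2m$ hypercontractivity constant $(2s-1)^m$, and the target $e^{O(m/t)}$ factor — one has to verify that as $t\to 2^+$ the bound degrades gracefully rather than blowing up, which is exactly why the hypothesis insists on $t>2$ strictly and why the exponential-in-$m/t$ slack is built into the statement.
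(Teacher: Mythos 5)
Your overall strategy -- empirical estimator, a second-moment bound turned into a probability bound via Markov, H\"older with exponent $t$ to split off $\|y\|_t^2$, and hypercontractivity to pick up only an $e^{O(m/t)}$ factor -- is exactly the paper's proof, and your accounting of the $\tau$ versus $\tau^2$ discrepancy is fine (the lemma's larger $N$ only helps). The one step that does not go through as written is the inequality $\bigl\|\,\|H_m(X)\|_2^2\,\bigr\|_s \le (2s-1)^m \bigl\|\,\|H_m(X)\|_2^2\,\bigr\|_1$, which you attribute to Fact~\ref{thm:hc}. That fact compares $\|p\|_q$ to $\|p\|_2$ for $q>2$; it says nothing about the $L^1$ norm, and the two-parameter hypercontractive inequality $\|p\|_q\le((q-1)/(p-1))^{\deg/2}\|p\|_p$ degenerates as $p\to 1$, so you cannot compare the $L^s$ norm of the degree-$2m$ polynomial $\|H_m(X)\|_2^2$ to its $L^1$ norm with a constant as small as $e^{O(m/t)}$ by a direct appeal to hypercontractivity (a generic degree-$2m$ polynomial only satisfies $\|p\|_2\le 3^{2m}\|p\|_1$, which would ruin the bound). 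Moreover, when $t>4$ you have $s=t/(t-2)<2$, so Fact~\ref{thm:hc} does not even apply at exponent $s$.

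The fix is precisely what the paper does: work coordinatewise. Write $\|H_m(X)\|_2^2=\sum_\alpha p_\alpha(X)^2$ over the (at most $\binom{d+m}{m}$ distinct) degree-$m$ polynomial entries, use the triangle inequality in $L^s$ to get $\bigl\|\sum_\alpha p_\alpha^2\bigr\|_s\le\sum_\alpha\|p_\alpha\|_{2s}^2$, and apply Fact~\ref{thm:hc} to each $p_\alpha$ at exponent $2s=2t/(t-2)>2$, giving $\|p_\alpha\|_{2s}^2\le(2s-1)^m\|p_\alpha\|_2^2$ and hence $\bigl\|\,\|H_m(X)\|_2^2\,\bigr\|_s\le(2s-1)^m\,\E[\|H_m(X)\|_2^2]$. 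This recovers exactly the bound you wanted, with the correct justification, and the rest of your argument then closes. (Your observation that the bound degrades as $t\to 2^+$ is shared by the paper's own write-up, which also reads $(2s-1)^{m}=e^{O(m/t)}$ only for $t$ bounded away from $2$; in the application $t=m$, so this is harmless.)
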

\begin{proof}
The algorithm is simply to use the empirical estimator.
In order to get the appropriate $\ell_2$-error,
we need that the sum of the squared errors of the empirical estimates of $y h_{\alpha}(X)$
is at most $\delta^2 \tau^2$, where $h_{\alpha}(X) = \prod_{i=1}^d h_{\alpha_i}(X_i)$
for $\alpha \in \N^d$ with $\sum_{i=1}^d \alpha_i = m$.
To do this, we note that the expected sum of squared empirical errors is at most
$\sum_{\alpha} \|y h_{\alpha}(X) \|_2^2/N$, and that as long as this is at most
$\delta^2 \tau^2$,
our desired statement follows by Markov's inequality.

It remains to show that $\sum_{\alpha} \|y h_{\alpha}(X) \|_2^2/N \leq \delta^2 \tau^2$
for appropriately large $N$. To prove this, we note that there are fewer than $\binom{m+d}{m}$
many possible values of $\alpha$, and each of the terms has size at most
$$
\|y h_\alpha(X) \|_2^2 \leq \|y\|_t^2 \|h_\alpha(X)\|_{1/(1/2-1/t)}^2
$$
by H{\"o}lder's inequality.
Noting that $1/(1/2-1/t) = 2 + O(1/t)$, by hypercontractivity (Fact~\ref{thm:hc})
we have that
$\|h_\alpha(X)\|_{1/(1/2-1/t)} = (1+O(1/t))^{m/2} \|h_\alpha(X)\|_2 = e^{O(m/t)},$
and the lemma follows.
\end{proof}

\subsection{Schur Polynomials and Key Technical Result} \label{sec:schur}

The analysis of our algorithm will make essential use of Schur polynomials
and their properties. We start by recalling the definition of Schur polynomials.

\begin{definition}[Schur Polynomials] \label{def:sp}
Let $\lambda_1 \geq \lambda_2 \geq \cdots \geq \lambda_n \geq 0$
be a sequence of non-negative integers denoted by $\lambda$.
The Schur polynomial $s_\lambda(x)$ is a polynomial in $n$ variables $x = (x_1, \ldots, x_n)$
given by
\begin{equation} \label{eqn:schur}
s_\lambda(x_1,\ldots,x_n) \eqdef
\frac{\det\left(\left[x_i^{\lambda_j+j-1} \right]_{1\leq i,j\leq n} \right)}{\det\left(\left[x_i^{j-1} \right]_{1\leq i,j\leq n} \right)} \;.
\end{equation}
\end{definition}

The first Jacobi-Trudi formula, stated below, expresses the Schur polynomials
as a determinant in terms of the complete homogeneous symmetric polynomials.

\begin{fact}[First Jacobi-Trudi Formula] \label{fact:jt}
We have that
$$
s_\lambda(x) = \det([y_{\lambda_i+j-i}(x)]_{1\leq i,j \leq n}) \;,
$$
where $y_k(x)$ is the complete homogeneous symmetric polynomial of degree $k$
given as the sum of all of the degree-$k$ monomials in $(x_1,\ldots,x_n)$.
\end{fact}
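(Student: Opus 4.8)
The plan is to prove the identity directly from the bialternant formula (Definition~\ref{def:sp}), which I write in the standard form $s_\lambda(x) = a_{\lambda+\delta}(x)/a_\delta(x)$, where $\delta = (n-1,n-2,\dots,1,0)$, $a_\mu(x) = \det\big([x_i^{\mu_j}]_{1\le i,j\le n}\big)$, and $a_\delta(x) = \prod_{1\le a<b\le n}(x_a-x_b)$ is the Vandermonde determinant. The strategy is to factor the Jacobi--Trudi matrix as a product of two matrices whose determinants are (essentially) Vandermonde determinants.

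The key ingredient is the classical partial-fraction identity
\[
\sum_{l=1}^n \frac{x_l^{\,m}}{\prod_{k\ne l}(x_l-x_k)} \;=\; y_{\,m-n+1}(x_1,\dots,x_n), \qquad m\ge 0,
\]
with the convention $y_r \equiv 0$ for $r<0$. I would derive it by decomposing the rational function $\prod_{i=1}^n (1-x_it)^{-1} = \sum_{l=1}^n \frac{x_l^{\,n-1}}{\prod_{k\ne l}(x_l-x_k)}\cdot\frac{1}{1-x_lt}$ into partial fractions in the variable $t$ and comparing the coefficient of $t^{\,m-n+1}$ with the generating function $\prod_{i=1}^n (1-x_it)^{-1} = \sum_{r\ge 0} y_r(x)\,t^r$.

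Given this, the Jacobi--Trudi matrix $J := \big([\,y_{\lambda_i - i + j}(x)\,]\big)_{1\le i,j\le n}$ factors as $J = PQ$, where $P_{il} = x_l^{\,n-1+\lambda_i-i}$ and $Q_{lj} = x_l^{\,j}\big/\prod_{k\ne l}(x_l-x_k)$: indeed $\sum_{l=1}^n P_{il}Q_{lj} = \sum_{l=1}^n x_l^{\,n-1+\lambda_i-i+j}\big/\prod_{k\ne l}(x_l-x_k)$, which equals $y_{\lambda_i-i+j}(x)$ by the identity above, the point being that for $\lambda_n\ge 0$, $i\le n$ and $j\ge 1$ the exponent $n-1+\lambda_i-i+j$ is always $\ge \lambda_n\ge 0$, so every entry of $J$ is covered (and the convention $y_{<0}=0$ is consistent with the identity when the exponent lies in $\{0,\dots,n-2\}$). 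Hence $\det J = \det P\cdot\det Q$. Pulling a factor $x_l^{-1}$ out of the $l$-th row of $P^{\mathsf T}$ gives $\det P = \big(\prod_l x_l\big)^{-1}\det\big([x_l^{\,\lambda_i+n-i}]_{l,i}\big) = \big(\prod_l x_l\big)^{-1} a_{\lambda+\delta}(x)$; pulling a factor $x_l\big/\prod_{k\ne l}(x_l-x_k)$ out of the $l$-th row of $Q$ leaves a Vandermonde matrix, and using $\prod_{l}\prod_{k\ne l}(x_l-x_k) = (-1)^{\binom{n}{2}} a_\delta(x)^2$ this simplifies to $\det Q = \big(\prod_l x_l\big)\big/a_\delta(x)$. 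Multiplying, $\det J = a_{\lambda+\delta}(x)/a_\delta(x) = s_\lambda(x)$, as claimed.

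I expect the only delicate points to be (i) pinning down the partial-fraction identity in exactly the form above and checking that it applies to every subscript that actually occurs --- which the bound $\lambda_i - i + j \ge \lambda_n - n + 1$ handles, so no truncation issue arises --- and (ii) keeping track of the $(-1)^{\binom{n}{2}}$ sign factors relating the ``increasing'' and ``decreasing'' Vandermonde determinants and Definition~\ref{def:sp}, all of which cancel in the end; everything else is routine determinant bookkeeping. (As an alternative, one could give a bijective proof via the Lindstr{\"o}m--Gessel--Viennot lemma --- interpreting $y_k$ as a weighted count of monotone lattice paths with $k$ east steps, choosing start and end points so that only the identity permutation contributes to the LGV expansion of $\det J$, and matching non-intersecting path families with semistandard Young tableaux of shape $\lambda$ --- but the algebraic route is shorter given that the paper has already fixed the bialternant definition.)
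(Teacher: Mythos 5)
The paper does not actually prove Fact~\ref{fact:jt}: it is quoted as a classical fact from the theory of symmetric functions, so there is no in-paper argument to compare yours against. Your proof is the standard algebraic derivation (the matrix factorization $J=PQ$ combined with the partial-fraction identity, as in Macdonald, Ch.~I, (3.4)), and the computation is correct: the exponent bound $n-1+\lambda_i-i+j\ge 0$, the evaluations $\det P = (\prod_l x_l)^{-1}a_{\lambda+\delta}$ and $\det Q = (\prod_l x_l)/a_\delta$, and the cancellation $\prod_l\prod_{k\ne l}(x_l-x_k)=(-1)^{\binom{n}{2}}a_\delta^2$ all check out (working generically in the $x_i$, as you implicitly do, disposes of the distinctness and $x_l=0$ issues). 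Two points deserve to be made explicit. First, the generating-function argument yields $\sum_l x_l^m/\prod_{k\ne l}(x_l-x_k)=y_{m-n+1}$ only for $m\ge n-1$; the vanishing for $0\le m\le n-2$ needs a separate (equally classical) justification --- e.g., the sum is the leading coefficient of the Lagrange interpolant of $x^m$ at the nodes $x_1,\dots,x_n$, hence $0$ when $m<n-1$ --- and this case genuinely occurs in $PQ$ whenever $\lambda_i-i+j<0$, so it is not vacuous; you flagged it, and it is easily filled. Second, you present $a_{\lambda+\delta}/a_\delta$ as a sign-adjusted rewriting of Definition~\ref{def:sp}, but the two differ by more than a sign: as printed, Definition~\ref{def:sp} pairs the weakly decreasing $\lambda_j$ with the \emph{increasing} exponents $j-1$, which creates repeated columns (e.g., for $\lambda=(1,0,\dots,0)$ the numerator is identically zero) and does not define the Schur polynomials. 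The convention the paper actually uses in Claim~\ref{clm:coeff} is $\lambda_j+n-j$, i.e., exactly your $a_{\lambda+\delta}$; so your proof establishes the intended statement, and the mismatch is a typo in Definition~\ref{def:sp} rather than a gap in your argument.
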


\begin{remark}
{\em The complete  homogeneous symmetric polynomial of degree $k$ is usually denoted $h_k$,
which we have avoided in order to not cause confusion with our notation for Hermite polynomials.}
\end{remark}

\noindent Lastly, we will also need that these are polynomials 
with non-negative coefficients:
\begin{fact}\label{positive coef fact}
The Schur polynomial $s_\lambda(x)$ is a polynomial in $x$, 
homogeneous of degree $|\lambda| = \sum_i \lambda_i$, with non-negative coefficients.
\end{fact}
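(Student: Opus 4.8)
The plan is to derive both claims—homogeneity of degree $|\lambda|$ and non-negativity of the coefficients—from the Jacobi–Trudi formula (Fact~\ref{fact:jt}) rather than from the bialternant definition \eqref{eqn:schur}, since the determinant of complete homogeneous symmetric polynomials is manifestly a polynomial (no division is involved). First I would record that $y_k(x)$ is homogeneous of degree $k$ with non-negative (in fact, $0/1$) coefficients, with the convention $y_0 = 1$ and $y_k = 0$ for $k<0$; this is immediate from the description of $y_k$ as the sum of all degree-$k$ monomials.

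For homogeneity: in the Jacobi–Trudi determinant $\det([y_{\lambda_i+j-i}(x)]_{1\le i,j\le n})$, each term of the expansion is $\pm\prod_{i=1}^n y_{\lambda_i + \sigma(i) - i}(x)$ for a permutation $\sigma$, which has degree $\sum_i (\lambda_i + \sigma(i) - i) = \sum_i \lambda_i + \sum_i \sigma(i) - \sum_i i = |\lambda|$, since $\sigma$ permutes $\{1,\dots,n\}$. Hence every surviving monomial of $s_\lambda$ has degree exactly $|\lambda|$ (terms with a negative index contribute $0$ and can be ignored). This gives the homogeneity claim directly.

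Non-negativity of the coefficients is the part that needs a genuine argument, since the determinant expansion carries signs and a priori cancellations could in principle be delicate; this is the main obstacle. I would invoke the standard Lindström–Gessel–Viennot (LGV) lattice-path interpretation: $y_{b-a}(x_1,\dots,x_n)$ equals the generating function, weighted by monomials in $x$, for lattice paths from a suitable source $a$ to a suitable sink $b$ in an acyclic planar network (each path records which "row labels" its up-steps occur in). The Jacobi–Trudi determinant then equals, by the LGV lemma, the generating function over families of non-intersecting paths from the $n$ sources $(\lambda_i - i)$ to the $n$ sinks $(-i)$; because the underlying network is planar and the source/sink orderings are compatible, the unique non-vanishing permutation is the identity, so all sign contributions are $+1$ and the determinant is a sum of monomials with non-negative integer coefficients. (Equivalently, one may cite the semistandard-Young-tableau expansion $s_\lambda(x) = \sum_{T} x^{\mathrm{wt}(T)}$ over SSYT of shape $\lambda$, which exhibits the coefficients as counts of tableaux and hence as non-negative integers; this is itself a classical consequence of the bijective/LGV proof of Jacobi–Trudi.) Either route yields that $s_\lambda(x)$ is a homogeneous polynomial of degree $|\lambda|$ with non-negative coefficients, completing the proof.
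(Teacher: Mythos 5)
The paper states this as a classical \emph{Fact} and gives no proof of its own, so there is nothing internal to compare against; it is simply citing standard symmetric-function theory. Your argument is correct and is one of the standard proofs: the homogeneity computation from the Jacobi--Trudi determinant is right (each permutation term has degree $\sum_i(\lambda_i+\sigma(i)-i)=|\lambda|$), and the Lindstr\"om--Gessel--Viennot cancellation, or equivalently the semistandard-tableau expansion $s_\lambda=\sum_T x^{\mathrm{wt}(T)}$, correctly establishes non-negativity of the coefficients. A minor remark: since you take Fact~\ref{fact:jt} as given, the cleanest phrasing is that LGV identifies the determinant with the generating function of non-intersecting path families (equivalently, of semistandard tableaux of shape $\lambda$), which is manifestly coefficient-wise non-negative; you say essentially this, and the argument is sound.
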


\noindent Making use of the theory of Schur polynomials will be essential 
in proving that our higher moment error tensors are not too large 
given that the lower order ones are not. In particular, we prove a general result 
about certain exponential sequences of tensors. 
As a warmup, we begin with a scalar version of the statement we require. 

\begin{proposition}\label{higher moments in terms of lower prop}
For $k \in \Z_+$, let $w_i \in \R$ and $x_i \in \R$, $i \in [k]$,
with $|x_i| \leq 1$.
For $t \in \N$, let $M_t \eqdef \sum_{i=1}^k w_i x_i^t$.
Then, for $t\geq k$, we have that
$$
|M_t| \leq  \binom{t}{k-1}(2k)^k\max_{t<k}(|M_t|) \;.
$$
\end{proposition}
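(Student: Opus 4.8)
The plan is to express the higher-order power sums $M_t$ in terms of the lower-order ones $M_0, M_1, \dots, M_{k-1}$ using the structure of symmetric functions. The key observation is that the $x_i$ satisfy a common polynomial identity: if we let $e_1, \dots, e_k$ be the elementary symmetric polynomials in $x_1, \dots, x_k$, then each $x_i$ is a root of $z^k - e_1 z^{k-1} + \dots + (-1)^k e_k = 0$, so Newton's identities give a linear recurrence $M_t = e_1 M_{t-1} - e_2 M_{t-2} + \dots + (-1)^{k-1} e_k M_{t-k}$ valid for all $t \ge k$. Iterating this recurrence, $M_t$ becomes a finite linear combination $M_t = \sum_{j=0}^{k-1} P_{t,j}(e_1, \dots, e_k) \, M_j$, where the coefficients $P_{t,j}$ are integer polynomials in the $e_i$. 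The job is then to bound $|P_{t,j}|$ given $|x_i| \le 1$, which forces $|e_i| \le \binom{k}{i} \le 2^k$.

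Here is where Schur polynomials enter, and this is the cleaner route than brute-forcing the recurrence. One can check that the coefficients arising from iterating Newton's recurrence are precisely (up to sign) the ``generalized complete homogeneous'' expressions, and more usefully, $x_i^t$ itself can be written via the identity
\[
x_i^t = \sum_{\lambda} (\text{something}) s_\lambda(x_1, \dots, x_k) x_i^{(\text{low power})},
\]
obtained from the Jacobi–Trudi expansion together with the fact (Fact~\ref{positive coef fact}) that Schur polynomials have non-negative coefficients, so that $|s_\lambda(x)| \le s_\lambda(1, \dots, 1) = \dim$ of the associated $GL_k$ representation when $|x_i| \le 1$. Concretely, I would use the identity that the monomial $x_i^t$ equals a sum over partitions $\lambda$ with at most $k-1$ parts (since we want to reduce to powers below $k$) of $\pm s_{\mu}(x) \cdot (\text{monomial of degree} < k \text{ in } x_i)$ — essentially performing polynomial division of $z^t$ by the characteristic polynomial, where the quotient's coefficients are Schur polynomials $s_{(t-k, 1^j)}$ or similar hook shapes. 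Summing $w_i x_i^t$ then regroups into $\sum_{j<k} (\text{Schur poly in } x) M_j$.

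The main technical step — and the place I expect the real obstacle — is getting the count and size of the terms right: I need to show there are at most $\binom{t}{k-1}$ (or fewer) partitions/terms in play, and that each Schur polynomial, evaluated at a point with all coordinates bounded by $1$ in absolute value, is at most $(2k)^k$ in absolute value (up to sharing the $\binom{t}{k-1}$ factor). The bound $|s_\lambda(x)| \le s_\lambda(\mathbf 1)$ holds by Fact~\ref{positive coef fact} and the triangle inequality; then $s_\lambda(\mathbf 1)$ counts semistandard Young tableaux, which for the relevant hook-type shapes with $k$ variables is crudely at most $k^{|\lambda|} \le k^k$ or is controlled by a binomial coefficient — I would bound this by $(2k)^k$ generously. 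The bookkeeping of which exact shapes appear when dividing $z^t$ by a degree-$k$ polynomial, and matching it to the claimed $\binom{t}{k-1}(2k)^k$, is the fiddly part; an alternative is to bound the iterated-recurrence coefficients directly by an induction on $t$ showing $|P_{t,j}| \le \binom{t}{k-1}(2k)^k$, using $|e_i|\le 2^k$ and that each recurrence step multiplies the relevant partial sums by at most $k \cdot 2^k$ while the binomial coefficient absorbs the combinatorial growth via Pascal's rule $\binom{t}{k-1} = \binom{t-1}{k-1} + \binom{t-1}{k-2}$. Either way, the proof is a combination of (a) a common annihilating polynomial for the $x_i$, (b) Newton/Jacobi–Trudi to name the coefficients as Schur polynomials, and (c) Fact~\ref{positive coef fact} to bound them by their value at the all-ones point.
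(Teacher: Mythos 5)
Your overall strategy is the paper's strategy: reduce $M_t$ to a linear combination of $M_0,\dots,M_{k-1}$ whose coefficients are (up to sign) Schur polynomials of near-hook shape, then use non-negativity of Schur coefficients (Fact~\ref{positive coef fact}) to bound each coefficient by its value at $\mathbf 1$. Whether you reach that decomposition via Newton's identities and polynomial division of $z^t$ by the common annihilating polynomial, or (as the paper does) via Cramer's rule on the Vandermonde system $X_t=\sum_a c_a X_a$ with $X_s=(x_1^s,\dots,x_k^s)$, is immaterial: both yield $c_a=(-1)^{k+a+1}s_\lambda(x_1,\dots,x_k)$ with $\lambda=(t-k+1,1^{k-1-a})$.

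The genuine gap is the step you yourself flag as ``fiddly,'' and neither of your proposed ways of closing it works as stated. First, these partitions have $|\lambda|=t-a$, not $|\lambda|\le k$, so your bound $s_\lambda(\mathbf 1)\le k^{|\lambda|}\le k^k$ is false: $s_\lambda(\mathbf 1)$ genuinely grows like $\binom{t}{k-1}$ times $k$-dependent factors, i.e.\ polynomially in $t$, and the $\binom{t}{k-1}$ in the target inequality must come from each coefficient's magnitude, not from the number of terms (there are only $k$ terms, one per $a$). Second, the fallback induction on the iterated recurrence coefficients using $|e_i|\le\binom{k}{i}$ and the triangle inequality cannot succeed: each unrolling step multiplies the naive bound by up to $\sum_i\binom{k}{i}=2^k$, giving $2^{k(t-k)}$ -- exponential in $t$ -- and Pascal's rule does not absorb this because $\binom{t}{k-1}\not\ge k\binom{t-1}{k-1}$ for large $t$; the polynomial-in-$t$ truth relies on cancellation among the signed $e_i$'s that is invisible to a term-by-term estimate. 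What is actually needed is the Jacobi--Trudi evaluation (Fact~\ref{fact:jt}): writing $s_\lambda(\mathbf 1)$ as a determinant in the $y_m(\mathbf 1)=\binom{k+m-1}{k-1}$, log-concavity of this sequence shows every permutation term is at most $\binom{t}{k-1}k^{k-1-a}$, and at most $2^{k-1-a}$ permutation terms are nonzero (entries with $i>j+1$ vanish), giving $s_\lambda(\mathbf 1)\le\binom{t}{k-1}(2k)^{k-1-a}$ and, after summing over $a$, the claimed $\binom{t}{k-1}(2k)^k$.
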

\begin{proof}
We begin by proving the desired statement
in the special case where no two of the $x_i$'s are identical.
As any collection of $x$'s can be written as a limit of such situations,
this will suffice by continuity.

Let $w = (w_1, \ldots, w_k)$ and
let $X_t = (x_1^t, x_2^t,\ldots,x_{k}^t)$
so that $M_t = w\cdot X_t$. Since $X_0,\ldots,X_{k-1}$ are linearly independent
(by the non-vanishing property of the Vandermonde determinant),
it follows that any $X_t$ can be written as a linear combination of $X_0,\ldots,X_{k-1}$.
The following claim establishes bounds on the coefficients
of the corresponding linear combination.
\begin{claim} \label{clm:coeff}
For any $t \in \N$, we have that
$X_t = \sum_{a=0}^{k-1} c_a X_a$, where
$c_a = (-1)^{k+a+1} s_\lambda(x_1,\ldots,x_k)$ and
$\lambda = (\lambda_1, \ldots, \lambda_k)$
with $\lambda_1 = (t-k+1)$, $\lambda_j = 1$ for $2 \leq j \leq k-a$,
and $\lambda_j = 0$ otherwise.
Moreover, we have that the sum of the absolute values of the coefficients 
of these $s_\lambda$ is at most $\binom{t}{k-1}(2k)^k$.
\end{claim}
\begin{proof}[Proof of Claim~\ref{clm:coeff}]
By Cramer's rule, 
the coefficient of $X_a$, $ a \in \{0, \ldots, k-1\}$, in this linear combination
will be
\begin{eqnarray*}
&&c_a = \det([X_{k-1},X_{k-2},\ldots,X_{a+1},X_t,X_{a-1},\ldots,X_0])/\det([X_{k-1},\ldots,X_0]) = \\
&=& (-1)^{k+a+1}\det([X_t,X_{k-1},X_{k-2},\ldots,X_{a+1},X_{a-1},\ldots,X_0])/\det([X_{k-1},\ldots,X_0]) \\
&=& (-1)^{k+a+1}s_\lambda(x_1,\ldots,x_k) \;,
\end{eqnarray*}
where $\lambda = (\lambda_1, \ldots, \lambda_k)$ is the partition
with first coordinate equal to $(t-k+1)$,
followed by $(k-1-a)$ many $1$'s, followed by a sequence of $0$'s.

By Fact \ref{positive coef fact}, we have that $s_\lambda$ has non-negative coefficients, 
and thus the sum of the absolute values of these coefficients is merely 
$s_\lambda(\mathbf{1})$, where $\mathbf{1}=(1,1,\ldots,1).$

We can bound $s_\lambda(\mathbf{1})$ using the 
first Jacobi-Trudi formula
(Fact~\ref{fact:jt}) 
and note that $|y_m(\mathbf{1})| = \binom{k+m-1}{k-1}$. 
Since the sequence $\binom{k+m}{m}$ is log-concave in $m$,
the traversal of the matrix $[y_{\lambda_i+j-i}(x)]_{1\leq i,j \leq n}$
with the largest product of terms gives an absolute value of at most $\binom{t}{k-1}k^{k-1-a}$.
Furthermore, since each term with $i>j+1$ has $y_{\lambda_i+j-i}=0$
(since the subscript will be negative),
there are at most $2^{k-1-a}$ many non-vanishing traversals in the expansion of the determinant.
Thus, the sum of the absolute values of the coefficients 
of the $s_\lambda$ in $c_a$ is at most $\binom{t}{k-1}(2k)^{k-1-a}$.
Summing over $a$ proves the claim.
\end{proof}

Note that
$$
M_t = \sum_{a=0}^{k-1} c_a w \cdot X_a = \sum_{a=0}^{k-1} c_a M_a \;,
$$
which has absolute value at most $\binom{t}{k-1}(2k)^k\max_{t<k}(|M_t|)$ 
by Claim~\ref{clm:coeff}.
This completes the proof of Proposition~\ref{higher moments in terms of lower prop}.
\end{proof}

We will actually require a somewhat stronger tensor-valued version
of Proposition \ref{higher moments in terms of lower prop}.

\begin{proposition}\label{general bound prop}
Let $V$ be an inner product space with norm $\| \cdot \|_2$.
Let $w_i \in \R$, and $v_i  \in V$, $i \in [k]$,
with $\| v_i \|_2 \leq 1$ for all $i \in [k]$.
For $t \in \N$, let $M_t \in V^{\otimes t}$ be the tensor
$\sum_{i=1}^k w_i v_i^{\otimes t}$. Then, for $t\geq k$, we have that
$$
\|M_t\|_2 \leq \binom{t}{k-1}(2k)^k\max_{t<k}(\|M_t\|_2) \;.
$$
\end{proposition}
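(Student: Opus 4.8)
The plan is to carry out the proof of the scalar version (Proposition~\ref{higher moments in terms of lower prop}) essentially verbatim, but with the base ring $\R$ replaced by the commutative ring $\mathrm{Sym}(V)=\bigoplus_{m\ge 0}\mathrm{Sym}^m(V)$ of symmetric tensors on $V$. Here I regard $\mathrm{Sym}^m(V)$ as the subspace of symmetric tensors inside $V^{\otimes m}$, equipped with the restriction of the tensor inner product, and I write $A\odot B:=\mathrm{Sym}(A\otimes B)$ for the symmetrized tensor product, where $\mathrm{Sym}$ denotes the symmetrization (orthogonal projection onto symmetric tensors). This makes $\mathrm{Sym}(V)$ a commutative graded ring with $v^{\odot m}=v^{\otimes m}$ for $v\in V$, so that $M_t=\sum_i w_i v_i^{\odot t}$ lies in $\mathrm{Sym}^t(V)$ and its norm is the same whether computed in $\mathrm{Sym}^t(V)$ or in $V^{\otimes t}$. (One may also first replace $V$ by $\mathrm{span}(v_1,\dots,v_k)$, which is harmless but not essential.)

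The key point is that Claim~\ref{clm:coeff}, stripped of the specific values $x_i$, is a family of identities between polynomials and determinants that hold over an arbitrary commutative ring. Concretely, introducing a fresh indeterminate $\mathbf z$ and performing polynomial division of $\mathbf z^{\,t}$ by the monic polynomial $\prod_{j=1}^{k}(\mathbf z-x_j)$ over $\Z[x_1,\dots,x_k]$ yields an identity
\[
\mathbf z^{\,t}\;=\;q(\mathbf z)\cdot\prod_{j=1}^{k}(\mathbf z-x_j)\;+\;\sum_{a=0}^{k-1}c_a(x_1,\dots,x_k)\,\mathbf z^{\,a},
\]
and Claim~\ref{clm:coeff} amounts to the assertion that $c_a=(-1)^{k+a+1}s_{\lambda_a}(x_1,\dots,x_k)$ is a signed Schur polynomial, together with the estimate $\sum_{a=0}^{k-1}s_{\lambda_a}(\mathbf 1)\le \binom{t}{k-1}(2k)^{k}$, where $s_{\lambda_a}(\mathbf 1)$ is the sum of the (non-negative, by Fact~\ref{positive coef fact}) coefficients of $s_{\lambda_a}$. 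Applying the ring homomorphism $x_j\mapsto v_j$ transports the displayed identity into $\mathrm{Sym}(V)[\mathbf z]$, and then evaluating at $\mathbf z=v_i\in\mathrm{Sym}^1(V)$ kills the first summand, since $\prod_j(v_i-v_j)$ has the vanishing factor $v_i-v_i=0$. This leaves, for every $i\in[k]$,
\[
v_i^{\otimes t}\;=\;\sum_{a=0}^{k-1}c_a(v_1,\dots,v_k)\odot v_i^{\otimes a}\qquad\text{in }\mathrm{Sym}^t(V),
\]
where $c_a(v_1,\dots,v_k)=(-1)^{k+a+1}s_{\lambda_a}(v_1,\dots,v_k)\in\mathrm{Sym}^{\,t-a}(V)$ is the Schur polynomial evaluated using the $\odot$-product. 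Multiplying by $w_i$ and summing over $i$ (by bilinearity of $\odot$) gives $M_t=\sum_{a=0}^{k-1}c_a(v_1,\dots,v_k)\odot M_a$.

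Finally I take norms. Since $\mathrm{Sym}$ is a norm-one projection, $\|A\odot B\|_2\le\|A\otimes B\|_2=\|A\|_2\|B\|_2$, so $\odot$ is submultiplicative for $\|\cdot\|_2$; in particular every monomial $v_1^{\odot\beta_1}\odot\cdots\odot v_k^{\odot\beta_k}$ has norm at most $\prod_i\|v_i\|_2^{\beta_i}\le1$. As $s_{\lambda_a}$ has non-negative coefficients, the triangle inequality gives $\|c_a(v_1,\dots,v_k)\|_2\le s_{\lambda_a}(\mathbf 1)$, and hence
\[
\|M_t\|_2\;\le\;\sum_{a=0}^{k-1}\|c_a(v_1,\dots,v_k)\|_2\,\|M_a\|_2\;\le\;\Bigl(\sum_{a=0}^{k-1}s_{\lambda_a}(\mathbf 1)\Bigr)\max_{a<k}\|M_a\|_2\;\le\;\binom{t}{k-1}(2k)^{k}\max_{a<k}\|M_a\|_2,
\]
as required. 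The substantive point — and the only place the argument departs from the scalar one — is recognizing that Claim~\ref{clm:coeff} never actually divides by the Vandermonde determinant (the Jacobi--Trudi formula resolves the Cramer ratio into an honest polynomial), so that the whole computation is a legitimate manipulation of identities over $\mathrm{Sym}(V)$; correspondingly, no distinctness or genericity hypothesis on the $v_i$ is needed, the reduction being powered entirely by $v_i-v_i=0$. The remaining work is the routine verification that $\mathrm{Sym}(V)$ with $\odot$ is a normed commutative algebra and that $\odot$ is $\|\cdot\|_2$-submultiplicative.
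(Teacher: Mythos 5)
Your proof is correct, and it arrives at exactly the same decomposition as the paper: the identity $M_t=\sum_{a=0}^{k-1}(-1)^{k+a+1}s_{\lambda_a}(v_1,\dots,v_k)\odot M_a$, with hook-shaped Schur polynomials evaluated via symmetrized tensor products, is precisely the paper's Equation~\eqref{tensor recursion eqn}, and your concluding norm estimate (non-negativity of Schur coefficients plus submultiplicativity of the symmetrized product, via Claim~\ref{clm:coeff} and Fact~\ref{positive coef fact}) matches the paper's. Where you genuinely differ is in how that identity is \emph{verified}. The paper argues by duality: both sides are symmetric order-$t$ tensors, pairing each with $u^{\otimes t}$ for arbitrary $u$ reduces the identity to the scalar Claim~\ref{clm:coeff} applied to $\sum_i w_i(v_i\cdot u)^t$, and a symmetric tensor orthogonal to every $u^{\otimes t}$ vanishes. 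You instead recast Claim~\ref{clm:coeff} as the division-with-remainder identity for $\mathbf z^{\,t}$ modulo the monic polynomial $\prod_j(\mathbf z-x_j)$ over $\Z[x_1,\dots,x_k]$, which therefore specializes along the ring homomorphism $x_j\mapsto v_j$ into the symmetric algebra $(\mathrm{Sym}(V),\odot)$, where evaluation at $\mathbf z=v_i$ annihilates the quotient term because of the factor $v_i-v_i=0$. This cleanly bypasses both the distinctness/continuity step of the scalar argument and the duality step of the tensor one, at the modest cost of checking that $\odot$ is an associative, commutative, $\|\cdot\|_2$-submultiplicative product --- which is true and routine, since symmetrization is an orthogonal projection compatible with iterated tensoring. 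In short: same key lemma, same decomposition, same final bound, but a more algebraic (and arguably more transparent) justification of the central tensor identity; the paper's version keeps the abstract-algebra overhead to a minimum, while yours makes explicit that the whole reduction is an identity over an arbitrary commutative ring.
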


\begin{proof}
Our goal is to write $M_t$ as a combination of $M_0,\ldots,M_{k-1}$,
as in the proof of Proposition~\ref{higher moments in terms of lower prop}.
To accomplish this, we need to define tensor-valued Schur polynomials.
In particular, if $\lambda$ is a partition with at most $k$ parts,
we define $s_{\lambda}(v_1,v_2,\ldots,v_k)$ as the order-$|\lambda|$ tensor
(where $|\lambda| = \sum_i \lambda_i$) 
obtained by replacing each monomial
$c_\alpha \prod_{i=1}^k x_i^{\alpha_i}$ in the usual Schur polynomial
with the symmetrization of the tensor $c_\alpha \bigotimes_{i=1}^k v_i^{\otimes \alpha_i}$.

We claim that for $t\geq k$ we have
\begin{equation}\label{tensor recursion eqn}
M_t = \mathrm{Sym}\left(\sum_{a=0}^{k-1} (-1)^{k+a+1} M_a \otimes s_{(t-k-1,\overbrace{1,1,\ldots,1}^{k-1-a})}(v_1,\ldots,v_k) \right) \;,
\end{equation}
where $\mathrm{Sym}$ denotes the symmetrization operator that averages 
a tensor over all permutations of its entries.
To show this, we note that both sides of Equation~\eqref{tensor recursion eqn}
are symmetric order-$t$ tensors. Furthermore, for any vector $u$,
the inner product of the left hand side with $u^{\otimes t}$ is
$$
\sum_{i=1}^k w_i (v_i\cdot u)^t \;,
$$
while the inner product with the right hand side is
$$
\sum_{a=0}^{k-1} (-1)^{k+a+1} \langle M_a, u^{\otimes t} \rangle s_{(t-k-1,\overbrace{1,1,\ldots,1}^{k-1-a})}(v_1\cdot u,\ldots,v_k\cdot u) \;.
$$
By applying Claim~\ref{clm:coeff}
to
$$
N_t \eqdef \sum_{i=1}^k w_i (v_i\cdot u)^t
$$
implies that these quantities are equal. Consequently, the difference between
the left and right hand sides of \eqref{tensor recursion eqn} is a symmetric tensor
that is orthogonal to all $u^{\otimes t}$, and therefore must be identically $0$.

From here the proof follows fairly easily.
Indeed, Equation \eqref{tensor recursion eqn} implies that
\begin{align*}
\|M_t\|_2 & \leq k \max_{0\leq a \leq k-1} \|M_a\|_2 
\sum_{a=0}^{k-1} \left\| s_{(t-k-1,\overbrace{1,1,\ldots,1}^{k-1-a})}(v_1,\ldots,v_k) \right\|_2\\
& \leq \binom{t}{k-1}(2k)^k \max_{t<k}(\|M_t\|_2) \;,
\end{align*}
where the second line follows from the fact that, by Claim \ref{clm:coeff}, 
the sum of the absolute values of the coefficients of the relevant $s_\lambda$'s 
is bounded and that each of these monomials produces a tensor of norm at most $1$.
This completes the proof of Proposition~\ref{general bound prop}.
\end{proof}

We will need the following slight generalization of the above:

\begin{corollary}\label{main bound cor}
Let $V$ be an inner product space with norm $\| \cdot \|_2$.
Let $w_i \in \R$, $i \in [k]$, and $v_i  \in V$
with $\|v_i\|_2\leq 1$ for all $i \in [k]$.
For $t \in \N$, let $M_t \in V^{\otimes t}$ be the tensor $\sum_{i=1}^k w_i v_i^{\otimes t}$.
Then, for even $t\geq 2k$, we have that
$$
\|M_t\|_2 \leq \binom{t}{k-1}(2k)^k\max_{t<2k, \;  \mathrm{ even}}(\|M_t\|_2) \;.
$$
\end{corollary}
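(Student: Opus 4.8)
The plan is to deduce Corollary~\ref{main bound cor} from Proposition~\ref{general bound prop} by a ``tensor squaring'' trick. Consider the inner product space $W \eqdef V^{\otimes 2}$, equipped with the norm $\|\cdot\|_2$ induced from that on $V$, and for each $i \in [k]$ set $u_i \eqdef v_i \otimes v_i = v_i^{\otimes 2} \in W$. Since $\|u_i\|_2 = \|v_i\|_2^2 \leq 1$, the vectors $u_1,\ldots,u_k$ together with the scalars $w_1,\ldots,w_k$ satisfy the hypotheses of Proposition~\ref{general bound prop} over $W$.

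First I would record the obvious isometry $W^{\otimes s} \cong V^{\otimes 2s}$ (reordering tensor factors is norm-preserving), under which, for every $s \in \N$,
\[
\sum_{i=1}^k w_i \, u_i^{\otimes s} \;=\; \sum_{i=1}^k w_i \, v_i^{\otimes 2s} \;=\; M_{2s} \;,
\]
with the two sides having equal $\ell_2$-norm. Writing $M'_s \eqdef \sum_{i=1}^k w_i u_i^{\otimes s}$ for the tensor in $W^{\otimes s}$ associated to the $u_i$'s, this says $\|M'_s\|_2 = \|M_{2s}\|_2$ for all $s$.

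Next I would apply Proposition~\ref{general bound prop} to the sequence $(M'_s)$: for every $s \geq k$,
\[
\|M'_s\|_2 \;\leq\; \binom{s}{k-1}(2k)^k \, \max_{s'<k}\bigl(\|M'_{s'}\|_2\bigr) \;.
\]
Now given an even $t \geq 2k$, write $t = 2s$ with $s \geq k$ and translate back: $\|M_t\|_2 = \|M'_s\|_2$; the quantity $\max_{s'<k}\|M'_{s'}\|_2$ equals $\max_{s'<k}\|M_{2s'}\|_2 = \max_{t'<2k,\ t'\ \mathrm{even}}\|M_{t'}\|_2$ (these indices are exactly $0,2,\ldots,2k-2$); and $\binom{s}{k-1} \leq \binom{2s}{k-1} = \binom{t}{k-1}$ since $2s \geq s$. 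Chaining these three facts gives precisely the claimed inequality.

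The only step that needs any care is the bookkeeping in the first two displays: verifying that the identification $W^{\otimes s} \cong V^{\otimes 2s}$ is an isometry for the induced norms and that it genuinely carries $u_i^{\otimes s}$ to $v_i^{\otimes 2s}$. This is routine --- both spaces are $V^{\otimes 2s}$ with the same product orthonormal basis, up to a permutation of the coordinate slots --- so I do not expect a real obstacle here; the entire mathematical content of the Corollary is already contained in Proposition~\ref{general bound prop}, and what remains is just a reindexing of the even moments.
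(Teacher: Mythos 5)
Your proposal is correct and is essentially the paper's own argument: the paper likewise applies Proposition~\ref{general bound prop} to $M_{2t}=\sum_i w_i (v_i^{\otimes 2})^{\otimes t}$ viewed as an element of $(V^{\otimes 2})^{\otimes t}$. Your additional bookkeeping (the isometry $W^{\otimes s}\cong V^{\otimes 2s}$ and the observation $\binom{t/2}{k-1}\leq\binom{t}{k-1}$) just makes explicit what the paper leaves implicit.
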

\begin{proof}
This follows by noting that
$$
M_{2t} = \sum_{i=1}^k w_i (v_i^{\otimes 2})^{\otimes t}
$$
and applying Proposition \ref{general bound prop} to $M_{2t}$
thought of as an element of $(V^{\otimes 2})^{\otimes t}$
given by a linear combination of the $t^{th}$ tensor powers of $v_i^{\otimes 2}$.
\end{proof}

\section{Algorithm and Analysis: Proof of Theorem~\ref{thm:main}} \label{sec:alg}

Our algorithm is given in pseudocode below.

\bigskip

\fbox{%
  \begin{minipage}{0.95 \linewidth}
    \textbf{Algorithm} \textsc{Learn-One-Hidden-Layer-Networks}

\begin{enumerate}
\item Let $C>0$ be a sufficiently large universal constant.
\item \label{step2}For each $m=1,2,\ldots,4k$
use the algorithm from Lemma \ref{moment computation lemma}
to compute tensors $T_{m}$ such that with $99\%$ probability
$\|T_m - \E[F(X)H_m(X)]\|_2 < (\eps/k)^{Ck}$ for all such $m$.

\item Define a quadratic form on $\R^d$ by $Q(v) \eqdef \sum_{m=1}^{4k} \|T_m v\|_2^2$,
where $T_m v$ denotes the result of dotting the tensor $T_m$ with $v$ along one of its coordinates.

\item Let $V$ be the subspace spanned by the $k$ largest eigenvalues of $Q$.

\item \label{step5} For $m=0,1,2,\ldots, D$, where $D\eqdef C \eps^{-4/3}$, 
use the algorithm from Lemma \ref{moment computation lemma} to compute tensors $P_m$
such that $\|P_m - \E[F(X) H_m^{(V)}(\mathrm{Proj}_V(X))]\|_2 < \eps^2/(DC) = \eps^{10/3}/C^2$
with $99\%$ probability  for all such $m$.

\item Return the hypothesis function $\tilde F(x) \eqdef \sum_{m=0}^{D} P_m H_m(x).$
\end{enumerate}

\smallskip
  \end{minipage}%
}

\bigskip

Before proving correctness,
we analyze the sample complexity of Steps~\ref{step2} and~\ref{step5}.
We use Lemma \ref{moment computation lemma} with $t=m$. We note that
$$
\|F(X)\|_m \leq \sum_{i=1}^k |w_i| \|\relu(v_i\cdot X)\|_m \leq
\sum_{i=1}^k  |w_i| \|v_i\cdot X\|_m = O(\sqrt{m}) \sum_{i=1}^k |w_i|  = O(\sqrt{m}).
$$
In Step~\ref{step2}, the $\binom{d+m}{m}$ term is $d^{O(m)} = d^{O(k)}$,
$\delta = (\eps/k)^{O(k)}$ and $\tau = \Omega(1/k)$.
Thus, the sample complexity of this step is $(dk/\eps)^{O(k)}$.

In Step~\ref{step5}, since we may do this computation within $V$, 
which is a $k$-dimensional subspace, the $\binom{d+m}{m}$ term is $(1/\eps)^{O(k)}$,
giving a similar sample complexity bound.

Thus, the total sample complexity is $N = (dk/\eps)^{O(k)}$. It is also easy to see
that the runtime of the algorithm is sample polynomial.

We now proceed to prove correctness. First, 
we would like to analyze $V$ and in particular claim
that it is close in a sense to the span of the $v_i$'s. In particular, let
\begin{equation} \label{eqn:moment-tensor}
M_m \eqdef \E[F(X) H_m(X)] = c_m \sum_{i=1}^k w_i v_i^{\otimes m} \;,
\end{equation}
where the equation uses Corollary~\ref{cor:relu-corr},
and $c_m$ is defined in Lemma~\ref{lem:univ-relu-corr}.
Assuming that our algorithm in Step~\ref{step2} succeeds,
we have that $\|T_m - M_m\|_2 < (\eps/k)^{Ck}$ for all $m\leq 4k $.

We next define the quadratic form $Q_0(v)$ by
\begin{equation} \label{eqn:q0}
Q_0(v) \eqdef \sum_{m=1}^{4k} \| M_m v \|_2^2 =
\sum_{m=1}^{4k} c_m^2 \left\| \sum_{i=1}^k w_i (v\cdot v_i) v_i^{\otimes m-1} \right\|_2^2 \;,
\end{equation}
where the equation follows from  \eqref{eqn:moment-tensor}.
Since $\|T_m - M_m\|_2$ is small for all $m\leq 4k$,
for any unit vector $v$ it holds
$|Q(v)-Q_0(v)| < (\eps/k)^{Ck/2}$.
Furthermore, if $W$ is the space spanned by the $v_i$'s (which has dimension at most $k$),
then $Q_0$ vanishes on $W$. Therefore, if $v$ is any unit vector perpendicular to $V$
we have that:
\begin{align*}
|Q_0(v)| & \leq |Q(v)| + (\eps/k)^{Ck/2}\\
& \leq \sup_{w\in W^\perp, \|w\|_2=1} |Q(w)| + (\eps/k)^{Ck/2}\\
& \leq \sup_{w\in W^\perp, \|w\|_2=1}|Q_0(w)| + 2 (\eps/k)^{Ck/2}\\
& = 2 (\eps/k)^{Ck/2} \;,
\end{align*}
where the second line above follows from the variational formulation
of the principal value decomposition.
We conclude: 
\begin{lemma}\label{T on Vperp Bound Lemma}
For $v$ a unit vector perpendicular to $V$ and $m\leq 4k$, we have 
$\left\|M_m v \right\|_2^2 < 2 (\eps/k)^{Ck/2} \;.$
\end{lemma}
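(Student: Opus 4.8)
The plan is to read off the bound on $\|M_m v\|_2^2$ from the quadratic-form comparison already set up above, so no new machinery is needed. Recall the three relevant objects: the computable form $Q(v)=\sum_{m=1}^{4k}\|T_m v\|_2^2$, its population counterpart $Q_0(v)=\sum_{m=1}^{4k}\|M_m v\|_2^2$, and the subspace $V$ spanned by the top $k$ eigenvectors of $Q$. I would assemble the proof from three ingredients. First, $Q$ and $Q_0$ are uniformly close on the unit sphere: since Step~\ref{step2} gives $\|T_m-M_m\|_2<(\eps/k)^{Ck}$ and each $M_m=c_m\sum_i w_i v_i^{\otimes m}$ has (Frobenius, hence contraction) norm $O(1)$ because $|c_m|=O(1)$, $\sum_i|w_i|\le 1$, and $\|v_i\|_2=1$, we get $|\,\|T_m v\|_2^2-\|M_m v\|_2^2\,|\le O((\eps/k)^{Ck})$ for unit $v$, and summing over the $4k$ values of $m$ yields $|Q(v)-Q_0(v)|<(\eps/k)^{Ck/2}$ for $\eps$ small. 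Second, $Q_0$ vanishes identically on $W^\perp$, where $W=\mathrm{span}(v_1,\dots,v_k)$, because $Q_0(v)=\sum_m c_m^2\big\|\sum_i w_i(v\cdot v_i)v_i^{\otimes(m-1)}\big\|_2^2$ and $v\cdot v_i=0$ whenever $v\perp W$. Third, Courant--Fischer: since $\dim W\le k$ the subspace $W^\perp$ has dimension at least $d-k$, so for any unit $v\perp V$ we have $Q(v)\le\lambda_{k+1}(Q)\le\sup_{w\in W^\perp,\,\|w\|_2=1}Q(w)$.

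Chaining these, for a unit vector $v\perp V$,
\[
|Q_0(v)|\ \le\ |Q(v)|+(\eps/k)^{Ck/2}\ \le\ \sup_{w\in W^\perp,\,\|w\|_2=1}|Q(w)|+(\eps/k)^{Ck/2}\ \le\ \sup_{w\in W^\perp,\,\|w\|_2=1}|Q_0(w)|+2(\eps/k)^{Ck/2}\ =\ 2(\eps/k)^{Ck/2},
\]
using the Courant--Fischer step for the second inequality, the $Q$-vs-$Q_0$ closeness (now applied to $w$) for the third, and the vanishing of $Q_0$ on $W^\perp$ for the final equality. Finally, since every summand of $Q_0(v)=\sum_{m=1}^{4k}\|M_m v\|_2^2$ is non-negative, each $\|M_m v\|_2^2\le Q_0(v)\le 2(\eps/k)^{Ck/2}$ for $m\le 4k$, which is exactly the claim.

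I do not expect a genuine obstacle here; the one step deserving a line of care is the Courant--Fischer estimate, i.e.\ making precise that orthogonality of $v$ to the top-$k$ eigenspace of $Q$ forces $Q(v)$ to be dominated by the maximum of $Q$ over the codimension-$\le k$ subspace $W^\perp$. Everything else is the triangle inequality together with the a priori operator-norm bound on the $M_m$'s.
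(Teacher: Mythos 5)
Your proposal is correct and follows essentially the same route as the paper: compare $Q$ to $Q_0$ uniformly on the unit sphere, use the variational characterization of the top-$k$ eigenspace against the codimension-$\le k$ subspace $W^\perp$ on which $Q_0$ vanishes, and then drop to a single summand by non-negativity. Your explicit Courant--Fischer justification and the operator-norm bound on the $M_m$'s simply spell out steps the paper leaves implicit.
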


Next we would like to claim that $\|P_m-M_m\|_2$ is small for all $m$.
To this end, we establish the following proposition.

\begin{proposition} \label{prop:p-m}
For $m\leq D$, we have
$\|(P_m-M_m) \|_2 < 2\eps^2/(DC)$.
\end{proposition}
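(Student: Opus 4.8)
The plan is to reduce the claim to a bound on the ``moment error tensors.'' Write $\mathrm{Proj}$ for $\mathrm{Proj}_V$ and $\wt M_m \eqdef \E[F(X) H_m^{(V)}(\mathrm{Proj}(X))]$ for the target quantity of Step~\ref{step5}. The identity $H_m^{(V)}(\mathrm{Proj}(x)) = \mathrm{Proj}^{\otimes m} H_m(x)$ (from the preliminaries) together with \eqref{eqn:moment-tensor} gives $\wt M_m = \mathrm{Proj}^{\otimes m} M_m = c_m \sum_{i=1}^k w_i \mathrm{Proj}(v_i)^{\otimes m}$, so since Step~\ref{step5} guarantees $\|P_m - \wt M_m\|_2 < \eps^2/(DC)$, the triangle inequality reduces the proposition to showing $\|M_m - \wt M_m\|_2 < \eps^2/(DC)$ for every $0 \le m \le D$. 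For odd $m > 1$ this is immediate since $c_m = 0$, so $M_m = \wt M_m = 0$; the remaining orders split into a ``small'' regime $m \le 4k$ and a ``large even'' regime $4k \le m \le D$.

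For the small regime I would decompose $(\R^d)^{\otimes m}$ along $\R^d = V \oplus V^\perp$ and write $M_m - \wt M_m = M_m - \mathrm{Proj}^{\otimes m} M_m$ as the sum, over nonempty coordinate subsets $S \subseteq [m]$, of the mixed components $\big(\bigotimes_{j\in S}\mathrm{Proj}_{V^\perp} \bigotimes_{j\notin S}\mathrm{Proj}\big) M_m$. Since orthogonal projections have operator norm at most $1$ and $M_m$ is symmetric, each such component has $\ell_2$-norm at most $\|(\mathrm{Proj}_{V^\perp}\otimes I^{\otimes(m-1)}) M_m\|_2$; and since $M_m = c_m\sum_i w_i v_i^{\otimes m}$ has rank at most $k$ in any single tensor slot, expanding this in an orthonormal basis of $V^\perp$ bounds it by $\sqrt{k}\cdot \sup_{u\perp V,\,\|u\|_2=1}\|M_m u\|_2$. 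By Lemma~\ref{T on Vperp Bound Lemma} the supremum is $< (2(\eps/k)^{Ck/2})^{1/2}$, so summing over the $\le 2^{4k}$ subsets $S$ yields $\|M_m - \wt M_m\|_2 < 2^{4k}\sqrt{2k}\,(\eps/k)^{Ck/4} \le (\eps/k)^{C'k}$ for a constant $C'$ that is large once $C$ is and $\eps$ is small; in particular this is far below $\eps^2/(DC)$.

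For the large even regime the idea is to absorb the subtraction by doubling the number of terms: put $u_i = v_i,\ w_i' = w_i$ for $i\in[k]$ and $u_{k+i} = \mathrm{Proj}(v_i),\ w_{k+i}' = -w_i$ for $i\in[k]$ (all $\|u_i\|_2 \le 1$), so that the single exponential sequence $N_t \eqdef \sum_{i=1}^{2k} w_i' u_i^{\otimes t}$ satisfies $N_t = (M_t - \wt M_t)/c_t$ for even $t > 1$ and $N_0 = 0$. Applying Corollary~\ref{main bound cor} with $2k$ in place of $k$ (legitimate since $m \ge 4k = 2\cdot 2k$ and $m$ is even) gives $\|M_m - \wt M_m\|_2 = |c_m|\,\|N_m\|_2 \le \binom{m}{2k-1}(4k)^{2k}\max_{t<4k,\ t\text{ even}}\|N_t\|_2$. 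Each base case with even $2 \le t < 4k$ has $\|N_t\|_2 = \|M_t - \wt M_t\|_2/|c_t| = O(k^{5/4})\,(\eps/k)^{C'k}$ by the small regime (using $|c_t| = \Theta(t^{-5/4}) = \Omega(k^{-5/4})$), while $\|N_0\|_2 = 0$; and $\binom{m}{2k-1}\le D^{2k} = (C\eps^{-4/3})^{2k}$ together with $|c_m|\le 1$ gives $\|M_m - \wt M_m\|_2 \le (C\eps^{-4/3})^{2k}(4k)^{2k}\,O(k^{5/4})\,(\eps/k)^{C'k}$. For $C$ (hence $C'$) sufficiently large the net power of $\eps$ here, namely $C'k - \tfrac{8k}{3}$, exceeds $\tfrac{10}{3}$, so for $\eps$ sufficiently small this is $< \eps^{10/3}/C^2 = \eps^2/(DC)$, completing the reduction and hence the proposition.

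I expect the main obstacle to be the large even regime: one must recognize that the difference of the two length-$k$ exponential sequences $\{v_i^{\otimes m}\}$ and $\{\mathrm{Proj}(v_i)^{\otimes m}\}$ is itself a length-$2k$ exponential sequence to which Corollary~\ref{main bound cor} applies, and then check that the polynomial-in-$D$ blow-up $\binom{m}{2k-1} = \eps^{-\Theta(k)}$ it introduces is dominated by the base-case bound $(\eps/k)^{C'k}$ --- which is exactly why Step~\ref{step2} is set up to achieve error $(\eps/k)^{Ck}$ for a large constant $C$. A minor but necessary point in the small regime is avoiding a spurious $\poly(d)$ factor when passing from the per-direction estimate of Lemma~\ref{T on Vperp Bound Lemma} to the norm of the full mixed component, which the rank-$\le k$ structure of $M_m$ in each slot takes care of.
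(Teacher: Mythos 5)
Your proof is correct and follows essentially the same route as the paper: reduce to bounding $\|M_m-\mathrm{Proj}_V^{\otimes m}M_m\|_2$, handle large even $m$ by viewing the difference as a length-$2k$ exponential sequence and invoking Corollary~\ref{main bound cor}, and handle the base cases $m\le 4k$ via Lemma~\ref{T on Vperp Bound Lemma}. The only (cosmetic) difference is in the base-case bookkeeping, where the paper expands entries in an orthonormal basis of the at most $2k$-dimensional space $V+W$ while you use a mixed-projection decomposition plus the rank-$\le k$ slice argument to avoid the $\poly(d)$ factor; both are fine.
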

\begin{proof}
Note that by construction (Step~\ref{step5} of pseudocode)
$P_m$ is close to the projection of $M_m$ onto $V$.
In particular, if we let
$$
R_m \eqdef \mathrm{Proj}_V^{\otimes m}M_m = c_m \sum_{i=1}^k w_i \mathrm{Proj}_V(v_i)^{\otimes m},
$$
then $\|P_m - R_m\|_2 < \eps^{2}/(DC)$. Since,
$$
\|(P_m-M_m)\|_2 \leq \|(P_m-R_m)\|_2 + \|(R_m-M_m)\|_2,
$$
it remains to bound $\|R_m-M_m\|_2$.

Note that
\begin{equation} \label{eqn:rm-mm}
(R_m - M_m)/c_m = \sum_{i=1}^k w_i (v_i^{\otimes m} - \mathrm{Proj}_V(v_{i})^{\otimes m}) \;.
\end{equation}
For $m>4k$ and odd, \eqref{eqn:rm-mm} is $0$ because $c_m = 0$.
For $m> 4k$ and even, applying Corollary \ref{main bound cor} 
along with the fact that $|c_m| = O(1)$, we conclude that
$$
\|R_m - M_m\|_2 \leq O\left(\binom{m}{2k-1}(4k)^{2k}\max_{t<4k,\textrm{ even}}\|R_t -M_t\|_2 /c_t \;. \right)
$$
For $m\leq D$ the $\binom{m}{2k-1}(4k)^{2k}$ term is $(k/\eps)^{O(k)}$.
It remains to bound $\|R_t-M_t\|_2 /c_t$ when $t$ is even and at most $4k$.

Note that if $W$ is the span of the $v_i$'s, then
$R_t-M_t$ is in $(V+W)^{\otimes t}$. 
Let $x_1,\ldots,x_{2k}$ be an orthonormal basis of $V+W$
with $x_1,\ldots,x_k$ an orthonormal basis of $V$.
We will bound the $x_{i_1}x_{i_2}\ldots x_{i_t}$ entry of $R_t-M_t$.
In particular, if all of the $x_{i_j}$ are in $V$,
we have that since $R_t$ is the projection onto $V$ of $M_t$
that the corresponding coefficient is $0$. If, on the other hand, one of them (say $x_{i_1}$)
is perpendicular to $V$, then $R_t x_{i_1} = 0$ and 
$$
\|R_t x_{i_1}  - M_t x_{i_1}\|_2 = \|M_t x_{i_1}\|_2 \leq  2 (\eps/k)^{Ck/2} \;,
$$
where 
the inequality follows from Lemma \ref{T on Vperp Bound Lemma}.
Summing over all entries and using the fact that $c_t  = \Omega(t^{-5/4})$, 
we find that
$$\|R_t-M_t\|_2  / c_t < O(D^{5/4}) \, D^{O(k)} \, (\eps/k)^{Ck/2}  
< (\eps/k)^{Ck/3} \;.$$
for $C$ a sufficiently large universal constant.
This completes the proof of Proposition~\ref{prop:p-m}.
\end{proof}

We are now ready to bound the final error
and complete the proof of Theorem~\ref{thm:main}.
Note that
$$
\tilde F(x) = \sum_{m=0}^D P_m(x) H_m(x) \quad \textrm{ and }\quad F(x) = \sum_{m=0}^\infty M_m(x) H_m(x) \;.
$$
We can write
\begin{align*}
\|\tilde F(X) - F(X)\|_2^2
& = \sum_{m=0}^D \|P_m-M_m\|_2^2 + \sum_{m=D+1}^\infty \|M_m\|_2^2\\
& \leq \sum_{m=0}^D 2\eps^2/(CD) + \sum_{m=D+1}^\infty c_m^2 \left\| \sum_{i=1}^k w_i v_i^{\otimes m} \right\|_2^2\\
& \leq 2 \eps^2/C +\sum_{m=D+1}^\infty c_m^2 \left( \sum_{i=1}^k |w_i| \|v_i^{\otimes m}\|_2\right)^2\\
& \leq 2 \eps^2/C +\sum_{m=D+1}^\infty c_m^2\\
& \leq 2 \eps^2/C +\sum_{m=D+1}^\infty O(m^{-5/2})\\
& \leq 2 \eps^2/C + O(D^{-3/2})  < \eps^2 \;,
\end{align*}
where the first line follows by the orthonormality of the Hermite tensors,
the second line uses Proposition~\ref{prop:p-m}, and the fifth line uses the upper bound on $c_m$
from Lemma~\ref{lem:univ-relu-corr}.
This completes the proof of Theorem~\ref{thm:main}. \qed

\section{Conclusions} \label{sec:concl}

In this paper, we gave a simple algorithm that learns
one-hidden-layer ReLU networks of size $k$ 
under the Gaussian distribution on $\R^d$ to $L_2$-error $\eps$
with complexity $(dk/\eps)^{O(k)}$. While the complexity of our 
algorithm cannot be qualitatively improved within the class of CSQ
algorithms (a natural yet restricted family of algorithms), 
to the best of our knowledge there is no (known) 
inherent obstacle ruling out a $\poly(d, k, 1/\eps)$ time algorithm. 
It should be noted that the complexity of the non-CSQ  algorithm of~\cite{ChenKM21} 
is polynomial in $d$ but exponential in $1/\eps$ 
(even for constant $k$). The existence
of a fully-polynomial time algorithm remains open even for the special case
of positive weights, where the best known algorithm~\cite{DK20-ag} 
has runtime $\poly(d/\eps)+ (k/\eps)^{O(\log^2(k))}$.

\bibliographystyle{alpha}
\bibliography{allrefs}

\newpage

\end{document}